\documentclass{article} %
\usepackage{colm2024_conference}

\usepackage{booktabs}
\usepackage{graphicx}
\usepackage{enumitem}
\usepackage{wrapfig}
\usepackage{algorithm}
\usepackage{algpseudocode}
\usepackage{wrapfig}
\usepackage{float}
\usepackage{microtype}
\usepackage{amsmath}
\usepackage{amssymb}
\usepackage{colortbl}
\usepackage[utf8]{inputenc}
\usepackage{caption}
\usepackage{subcaption}
\usepackage{xcolor}
\usepackage{setspace}
\usepackage{url}
\usepackage{multirow}
\usepackage{colortbl}
\usepackage{tabularx}
\usepackage{blindtext}
\usepackage{pgfplots}
\pgfplotsset{compat=1.18}
\usepackage{tikz}
\usetikzlibrary{er, positioning, bayesnet}
\usepackage{makecell}
\usepackage{tipa}
\usepackage{siunitx}
\usepackage{nicefrac}
\usepackage{tocloft}
\usepackage{listings}
\usepackage[raster, skins]{tcolorbox} %
\usepackage{xltabular}
\usepackage{adjustbox}
\usepackage{xurl}
\usepackage{longtable}  

\usepackage{amsmath,amsfonts,bm}

\def\eqref#1{equation~\ref{#1}}

\def\1{\bm{1}}

\DeclareMathAlphabet{\mathsfit}{\encodingdefault}{\sfdefault}{m}{sl}
\SetMathAlphabet{\mathsfit}{bold}{\encodingdefault}{\sfdefault}{bx}{n}

\definecolor{lightgray}{rgb}{0.9,0.9,0.9}

\usepackage{afterpage}
\usepackage{cuted}
\usepackage{stfloats}
\usepackage{ulem}           
\usepackage{soul}           
\usepackage{tablefootnote}  
\usepackage{comment}        

\usepackage{pifont}         
\usepackage{wasysym}

\usepackage{longtable}
\usepackage{array}
\usepackage{caption}  
\usepackage{ragged2e} 

\usepackage{listings}
\usepackage{xcolor}
\usepackage{graphicx}
\usepackage{wrapfig}
\usepackage{amsthm}

\newtheorem{theorem}{Theorem}[section]

\newtheorem{corollary}[theorem]{Corollary}

\definecolor{lightblue}{RGB}{173,216,230}
\definecolor{lightgreen}{RGB}{144,238,144}
\definecolor{lightpink}{RGB}{255, 228, 225}
\definecolor{lightred}{RGB}{255,182,193}
\definecolor{lightyellow}{RGB}{255,255,224}
\definecolor{lightpurple}{RGB}{221,160,221}
\definecolor{lightgray}{RGB}{211,211,211}
\definecolor{lightorange}{RGB}{255,218,185}
\definecolor{lightpeach}{rgb}{1.0, 0.882, 0.788}
\definecolor{lightcyan}{rgb}{0.8196, 0.9725, 0.9804}
\definecolor{sh_blue}{rgb}{0,0.60,0.93}
\definecolor{sh_red}{rgb}{0.8627, 0.3098, 0.3176}
\definecolor{highlight}{RGB}{255,255,0}
\definecolor{warning}{RGB}{255,99,71}
\definecolor{success}{RGB}{50,205,50}
\definecolor{info}{RGB}{30,144,255}
\definecolor{top1}{RGB}{255,179,179}
\definecolor{top2}{RGB}{255,217,179}
\definecolor{top3}{RGB}{255,255,179}
\definecolor{textblue}{RGB}{94,159,220} 
\definecolor{textgreen}{RGB}{59,125,35} 
\definecolor{textorange}{RGB}{192,80,21} 
\definecolor{tagred}{RGB}{196,15,15} 
\definecolor{tagblue}{RGB}{33,95,154} 
\definecolor{teaserblue}{RGB}{33,95,154} 
\definecolor{teasergree}{RGB}{57,158,163} 
\definecolor{teaserpurpe}{RGB}{105,111,173} 

\definecolor{primary}{RGB}{70,130,180}
\definecolor{secondary}{RGB}{119,136,153}
\definecolor{accent}{RGB}{255,140,0}

\definecolor{customblue}{HTML}{E7EFFA}
\definecolor{custompink}{HTML}{F7E1ED}

\renewcommand{\arraystretch}{1.25}

\makeatletter
\DeclareRobustCommand\onedot{\futurelet\@let@token\@onedot}
\def\@onedot{\ifx\@let@token.\else.\null\fi\xspace}

\makeatother

\title{
VisionCreator: A Native Visual-Generation Agentic Model \\ with Understanding, Thinking, Planning and Creation}

\author{%
Jinxiang Lai$^{2*}$, Zexin Lu$^{1*}$, Jiajun He$^{1}$, Rongwei Quan$^{1}$, Wenzhe Zhao$^{1}$, Qinyu Yang$^{1}$, Qi Chen$^{1}$, Qin Lin$^{1\dagger}$, Chuyue Li$^{1}$, Tao Gao$^{1}$, Yuhao Shan$^{1}$, Shuai Shao$^{1}$, \\  Song Guo$^{2\S}$, Qinglin Lu$^{1\dagger}$\\
{\small{$^1$Tencent Hunyuan,
$^2$Hong Kong University of Science and Technology}}\\
{\small{$^*${Equal contribution}, $^\S${Corresponding Author}, $^\dagger${Project lead}}}
}

\begin{document}

\maketitle

\begin{abstract}
Visual content creation tasks demand a nuanced understanding of design conventions and creative workflows-capabilities challenging for general models, while workflow-based agents lack specialized knowledge for autonomous creative planning. To overcome these challenges, we propose \textbf{VisionCreator}, a native visual-generation agentic model that unifies Understanding, Thinking, Planning, and Creation (UTPC) capabilities within an end-to-end learnable framework. Our work introduces four key contributions: (i) \textbf{VisGenData-4k} and its construction methodology using metacognition-based \textbf{VisionAgent} to generate high-quality creation trajectories with explicit UTPC structures; (ii) The VisionCreator agentic model, optimized through \textbf{Progressive Specialization Training (PST)} and \textbf{Virtual Reinforcement Learning (VRL)} within a high-fidelity simulated environment, enabling stable and efficient acquisition of UTPC capabilities for complex creation tasks; (iii) \textbf{VisGenBench}, a comprehensive benchmark featuring 1.2k test samples across diverse scenarios for standardized evaluation of multi-step visual creation capabilities; (iv) Remarkably, our VisionCreator-8B/32B models demonstrate superior performance over larger closed-source models across multiple evaluation dimensions. Overall, this work provides a foundation for future research in visual-generation agentic systems.
\end{abstract}    
\section{Introduction}
\label{sec:intro}
AI-assisted visual content creation has revolutionized workflows from professional design to social media. The field has evolved from single-image generation \citep{ho2020denoising,ramesh2022hierarchical,flux2024,saharia2022photorealistic,lin2024open} to complex multi-modal synthesis \citep{wu2025automated,xiao2025captain,xu2025mm,xue2025comfybench,guo2025comfymind}, demanding systems that can understand creative intent, plan multi-step operations, and autonomously execute intricate workflows.
As shown in Fig.\ref{fig:vision_creator}, current approaches to autonomous visual creation can be categorized into three main paradigms, each with distinct limitations: 
(a) \textit{General-purpose Unified Multimodal Models (UMM)} \cite{deng2025emerging,li2025omniflow,lai2024spider,cui2025emu3} leverage large-scale pre-training to achieve impressive visual understanding, but lack the domain-specific knowledge required for autonomous creative planning and struggle to decompose complex objectives without extensive prompt engineering. 
(b) \textit{Workflow-specific Agent} \cite{wu2025automated,xiao2025captain,xu2025mm} employ predefined pipelines for specific domains like movie generation or story creation, but their rigid architectures cannot adapt to diverse creative tasks or handle unexpected outcomes during execution. 
(c) \textit{Workflow-guided Agent} \cite{xue2025comfybench,guo2025comfymind} orchestrate external tools through carefully designed prompts and coordination logic, leveraging general language models to interpret requests and sequence operations.
However, this approach faces several limitations: (i) Reliance on prompt engineering rather than learned domain knowledge, limiting creative understanding ; (ii) Explicitly programmed coordination logic that restricts adaptability to diverse tasks; and (iii) Inability to be jointly optimized end-to-end for creative task performance.

\begin{figure*}[t]
\vspace{-1mm}
\centering
\includegraphics[width=0.99\textwidth]{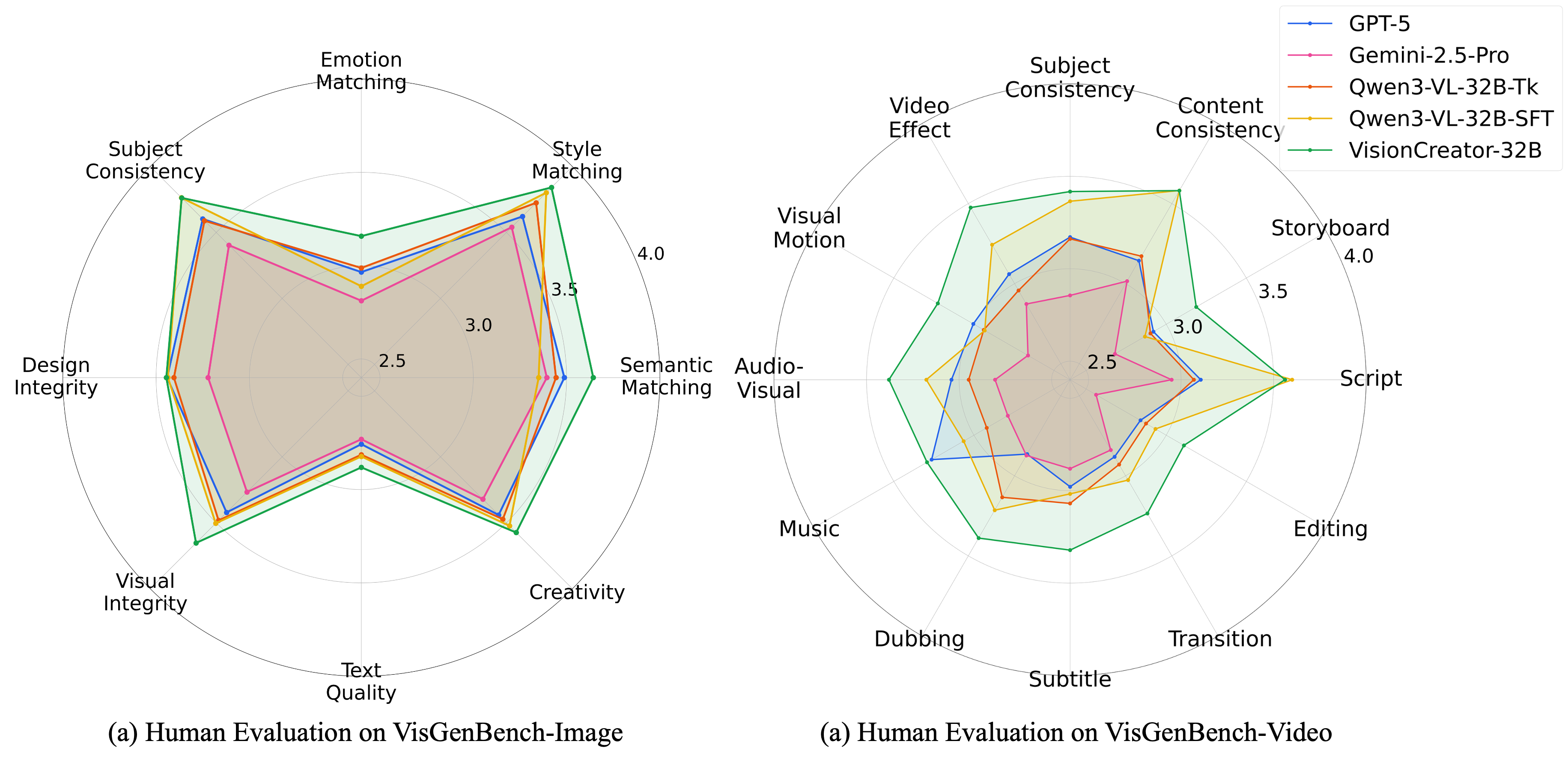}
\vspace{-2mm}
\caption{Human Evaluation results on VisGenBench-Image and VisGenBench-Video.}
\vspace{-0mm}
\label{fig:radar}
\end{figure*}

To overcome these limitations, we propose \textbf{VisionCreator}, a \textit{native} visual-generation agentic model that unifies Understanding, Thinking, Planning, and Creation (\textbf{UTPC}) capabilities in an end-to-end learnable framework, as shown in Fig.\ref{fig:vision_creator} (d). Unlike existing approaches that rely on predefined workflows or external template workflows, our native architecture intrinsically integrates the capabilities of \textit{Understanding} design conventions and user intent, \textit{Thinking} through complex creative constraints, \textit{Planning} multi-step execution trajectories, and \textit{Creation} of high-quality and diverse visual creation tasks.
However, realizing this new paradigm faces several critical challenges:

\noindent\ding{172} \textbf{Data Bottleneck}: Currently, no comprehensive datasets exist for training agents to perform visual content creation through tool invocation. The lack of high-quality trajectories prevents supervised learning of the UTPC capabilities.

\noindent\ding{173} \textbf{Task Complexity}: How to develop models that can handle the full spectrum of visual creation challenges, which encompass (i) Diverse task types, (ii) Varying difficulty levels from basic generation to advanced composition, and (iii) Complex creation tasks requiring 20+ execution steps? Existing approaches face significant limitations: specialized systems excel in narrow domains but fail to generalize across diverse tasks, while general models lack the depth for sophisticated creative reasoning and struggle with long-horizon consistency and adaptive strategy adjustment.

\noindent\ding{174} \textbf{Training Difficulty}: How to establish an effective and efficient training paradigm for such a native agent? The conventional SFT+RL framework faces significant obstacles: (i) SFT phase struggles to balance general capability preservation with domain-specific specialization, often leading to catastrophic forgetting or insufficient expertise; (ii) Direct online RL training with real tools incurs prohibitive costs and instability due to expensive API invocation and limited concurrency. Furthermore, designing accurate reward signals for multi-step creative trajectories is particularly challenging, as imperfect reward functions are highly vulnerable to reward hacking.

\begin{figure*}[t]
\vspace{-0mm}
\centering
\includegraphics[width=0.88\textwidth]{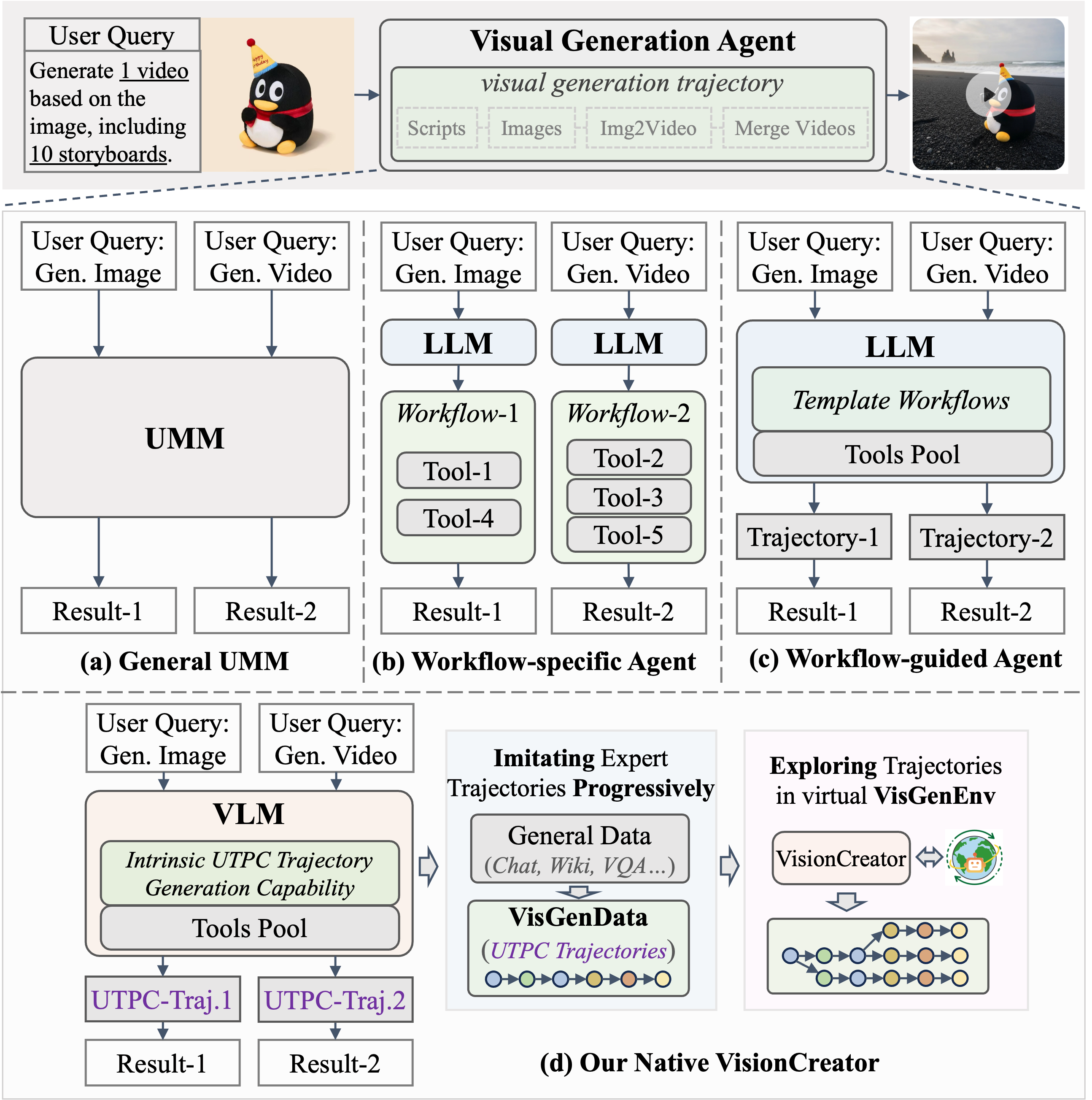}
\vspace{-1mm}
\caption{Framework comparisons. (a) UMM. (b) Workflow-specific Agent. (c) Workflow-guided Agent. (d) Our Native VisionCreator.}
\vspace{-1mm}
\label{fig:vision_creator}
\end{figure*}

To address these challenges, we propose:

\noindent(i) \textbf{VisGenData-4k with UTPC Structure}: We design a metacognition-based \textit{VisionAgent} to generate a comprehensive dataset following the UTPC structure, featuring diverse visual creation tasks across multiple difficulty levels. 
Through rigorous human quality inspection, we meticulously filter and retain only the highest-quality data samples. The resulting VisGenData-4k provides diverse and high-quality execution trajectories that explicitly capture \textit{Understanding} of design conventions, \textit{Thinking} through creative constraints, \textit{Planning} of multi-step trajectories, and \textit{Creation} of visual content, offering rich supervision signals for complex creative workflows.

\noindent(ii) \textbf{Progressive Specialization Training (PST)}: We introduce a novel Progressive Specialization Training methodology that cultivates UTPC capabilities through two-stage optimization. PST effectively addresses the generalization-specialization trade-off by first establishing robust \textit{Understanding} and \textit{Thinking} capacities through general foundation learning, followed by targeted domain specialization to enhance \textit{Planning} and \textit{Creation} expertise. This progressive strategy not only prevents catastrophic forgetting of general abilities but also efficiently identifies optimal data composition for stagewise specialization, enabling the model to develop comprehensive UTPC capacities while maintaining strong cross-domain reasoning abilities.

\noindent(iii) \textbf{Virtual VisGenEnv Construction}: We construct {VisGenEnv}, a virtual environment for VRL. It features 36 tools with high-fidelity simulation of their behaviors. Multimodal outputs are simulated by returning random samples from a media database, providing correct physical attributes. This design enables effective learning of workflow planning through accurate tool behavior simulation.

\noindent(iv) \textbf{Virtual Reinforcement Learning (VRL) with LtrReward}: We develop an innovative \textit{Virtual Reinforcement Learning (VRL)} paradigm that conducts the entire reinforcement learning using \textit{Long Trajectory Reasoning Reward (LtrReward)} within the high-fidelity \textit{VisGenEnv}. This approach bypasses the prohibitive cost of thousands of GPUs by leveraging simulated tool-call behaviors and functional logic, enabling stable and scalable learning of high-quality planning and action trajectories.
Moreover, we provide a theoretical analysis that establishes formal guarantees on sim-to-real transfer and real-world performance improvement.

Finally, we introduce \textbf{VisGenBench}, a comprehensive benchmark designed for evaluating \textit{visual generation agentic models} that operate through multi-step tool invocation to accomplish complex image and video creation tasks. Our benchmark encompasses: (i) \textit{Comprehensive Test Suite} - featuring 1.2k test samples including 400 image-generation tasks and 800 video-generation tasks; (ii) \textit{Diverse Applications} - spanning 10 evaluation dimensions across 35+ real-world scenarios; (iii) \textit{Standardized Protocol} - ensuring reproducible evaluation through structured scoring rubrics.

Overall, our contributions are: (i) The VisionCreator, a novel native visual-generation agentic model that unifies UTPC capabilities in an end-to-end learnable framework; (ii) VisGenData-4k and its construction framework using metacognition-based VisionAgent to generate high-quality creation trajectories with UTPC structures; (iii) A progressive training methodology combining Progressive Specialization Training (PST) and Virtual Reinforcement Learning (VRL) with LtrReward, enabling stable and efficient learning of complex creation trajectories entirely within a virtual environment VisGenEnv; (iv) VisGenBench benchmark with 1.2k test samples across diverse scenarios for standardized evaluation of multi-step visual creation capabilities.

\section{Related Works}
\label{sec:related}

\subsection{Image Generation}
Current image generation models primarily fall into two categories: Autoregressive~\citep{chen2020generative,fan2024fluid,han2024infinity,tian2024visual,sun2024autoregressive,pang2024next} models and Diffusion~\citep{ho2020denoising,ramesh2022hierarchical,flux2024,saharia2022photorealistic,lin2024open} models. While these models provide powerful single-step image generation capabilities, they primarily focus on the \textit{Creation} aspect of visual content generation. Our VisionCreator builds upon these fundamental generation technologies but extends them by integrating comprehensive \textit{Understanding}, \textit{Thinking}, and \textit{Planning} capabilities. This allows our agent to not only generate individual images but also reason about complex creative requirements and plan multi-step visual creation workflows that leverage these underlying generation models as tools.

\subsection{Video Generation}
Video generation methods build on image models by adding time-based processing. Approaches like Make-A-Video~\citep{singer2022make} and SVD~\citep{blattmann2023stable} extend image generation to video, while newer architectures like DiT~\citep{peebles2023scalable} and MMDiT~\citep{esser2024scaling} in models such as CogvideoX~\citep{yang2024cogvideox} show progress in handling longer videos. These video generation tools are important for our agent's creation ability, but they work separately. Our VisionCreator connects these tools through planning and reasoning to handle complete creation tasks.

\subsection{Visual Generation Agents}
Current approaches to autonomous visual creation include three main agent paradigms: (i) {Workflow-specific Agents} (e.g., MovieAgent~\citep{wu2025automated}, Captain Cinema~\citep{xiao2025captain}, MM-StoryAgent~\citep{xu2025mm}) employ predefined pipelines for specialized domains but lack adaptability to diverse creative tasks. (ii) {ComfyUI Workflow Generation} methods (e.g., ComfyAgent~\citep{xue2025comfybench}, ComfyMind~\citep{guo2025comfymind}, ComfyUI-R1~\cite{xu2025comfyui}, ComfyGPT~\cite{huang2025comfygpt}) specialize in generating ComfyUI-format workflows, which limits their visual creation capability in general API scenarios. (iii) {Workflow-guided Agents} \citep{xue2025comfybench,guo2025comfymind} orchestrate external tools through prompt engineering but face limitations in creative understanding depth and end-to-end optimization. These limitations motivate our native visual-generation agent that intrinsically integrates UTPC capabilities in an end-to-end learnable framework.

\begin{figure*}[ht]
\vspace{-0mm}
\centering
\includegraphics[width=0.99\textwidth]{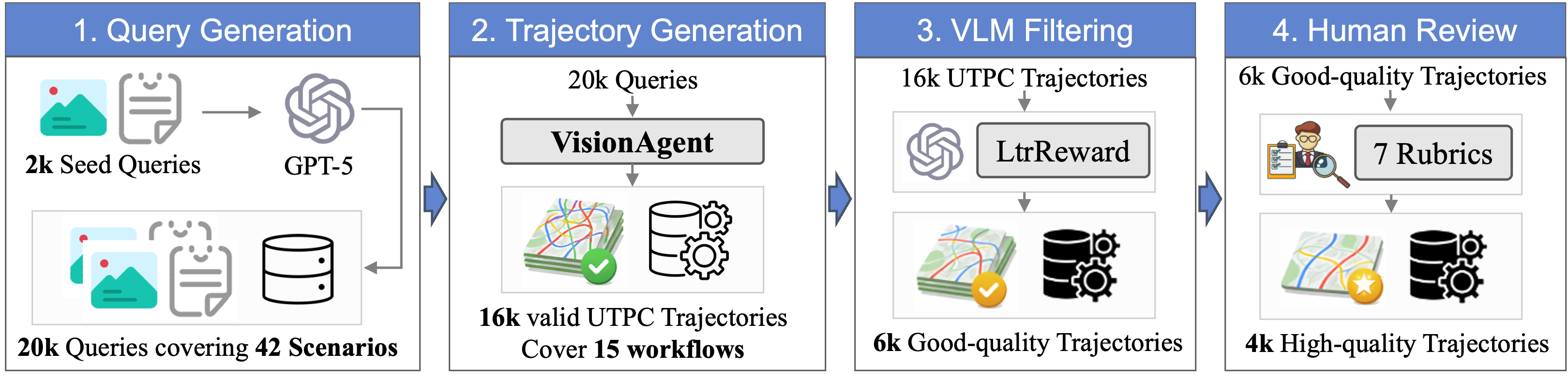}
\vspace{-1mm}
\caption{VisGenData-4k construction pipeline.}
\vspace{-3mm}
\label{fig:data_gen}
\end{figure*}

\section{VisGenData-4k with UTPC Structure}
\label{sec:training_dataset}
To tackle the data bottleneck in training visual creation agents, we design VisionAgent, a dataset generation framework based on a Metacognition paradigm. 
To construct a high-quality VisGenData dataset, VisionAgent employs commercial proprietary models (such as GPT-5, GPT-4o, Veo3, Sora2, etc) for multimodal data generation, and we further filter low-quality trajectories with algorithms and human experts.
As shown in Fig.~\ref{fig:data_gen}, the construction pipeline is as follows: (i) VisionAgent first generates 16k trajectories from 20k queries covering 42 scenarios. (ii) With the rigorous LtrReward and VLM-Grader methods, we remove 10k low-quality trajectories and obtain 6k candidate trajectories. (iii) These subsequently undergo a manual review by human experts, where 2k undesired trajectories are filtered out, resulting in high-quality 4k trajectories.

\begin{wrapfigure}{r}{0.6\textwidth}
\centering
\includegraphics[width=0.9\linewidth]{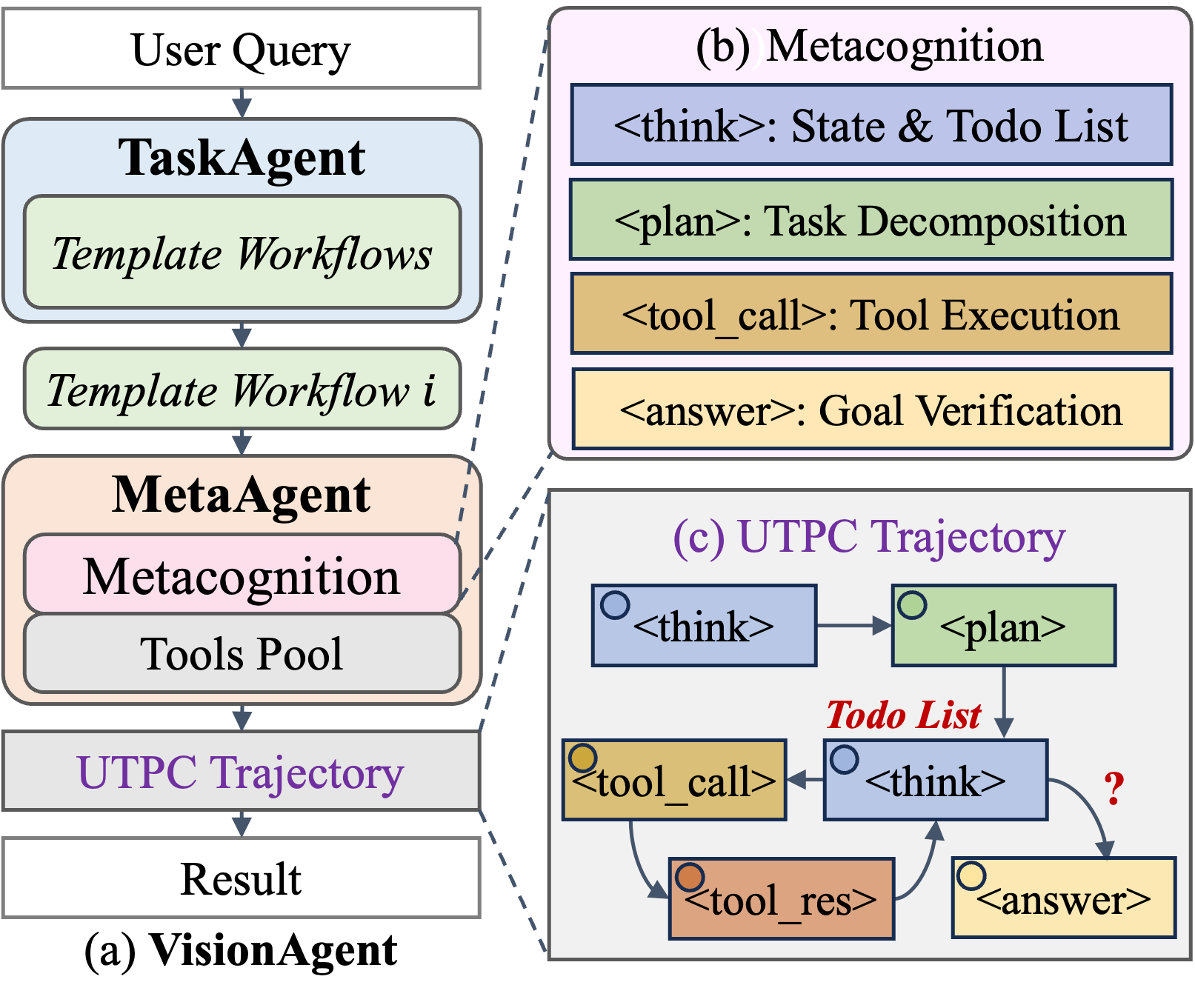}
\vspace{-2mm}
\caption{VisionAgent framework for dataset generation.}
\label{fig:visionagent}
\vspace{-2mm}
\end{wrapfigure}

\subsection{VisionAgent for Dataset Construction}
\label{subsec:dataset_construction}
As shown in Fig.\ref{fig:visionagent}, our VisionAgent generates high-quality execution trajectories that capture the complete reasoning process for complex visual creation tasks. 
VisionAgent with metacognition achieves a 72\% task success rate, representing a 30\% improvement over the baseline method that relies solely on thinking.

\noindent\textbf{Dual-Agent Architecture.} Our framework employs a dual-agent architecture that separates task understanding from execution reasoning:
(1) \textbf{TaskAgent}: Serves as the task classifier and router. It analyzes user inputs and performs fine-grained task classification across the 21 distinct task types, then selects appropriate predefined workflow templates and tools pool for specific task categories.
(2) \textbf{MetaAgent}: Functions as the core reasoning engine with metacognitive capabilities. It receives both the selected workflow and tools pool as inputs, then executes structured reasoning through four standardized reasoning types defined in metacognition.

\begin{wrapfigure}{r}{0.6\textwidth}
    \centering
    \includegraphics[width=0.9\linewidth]{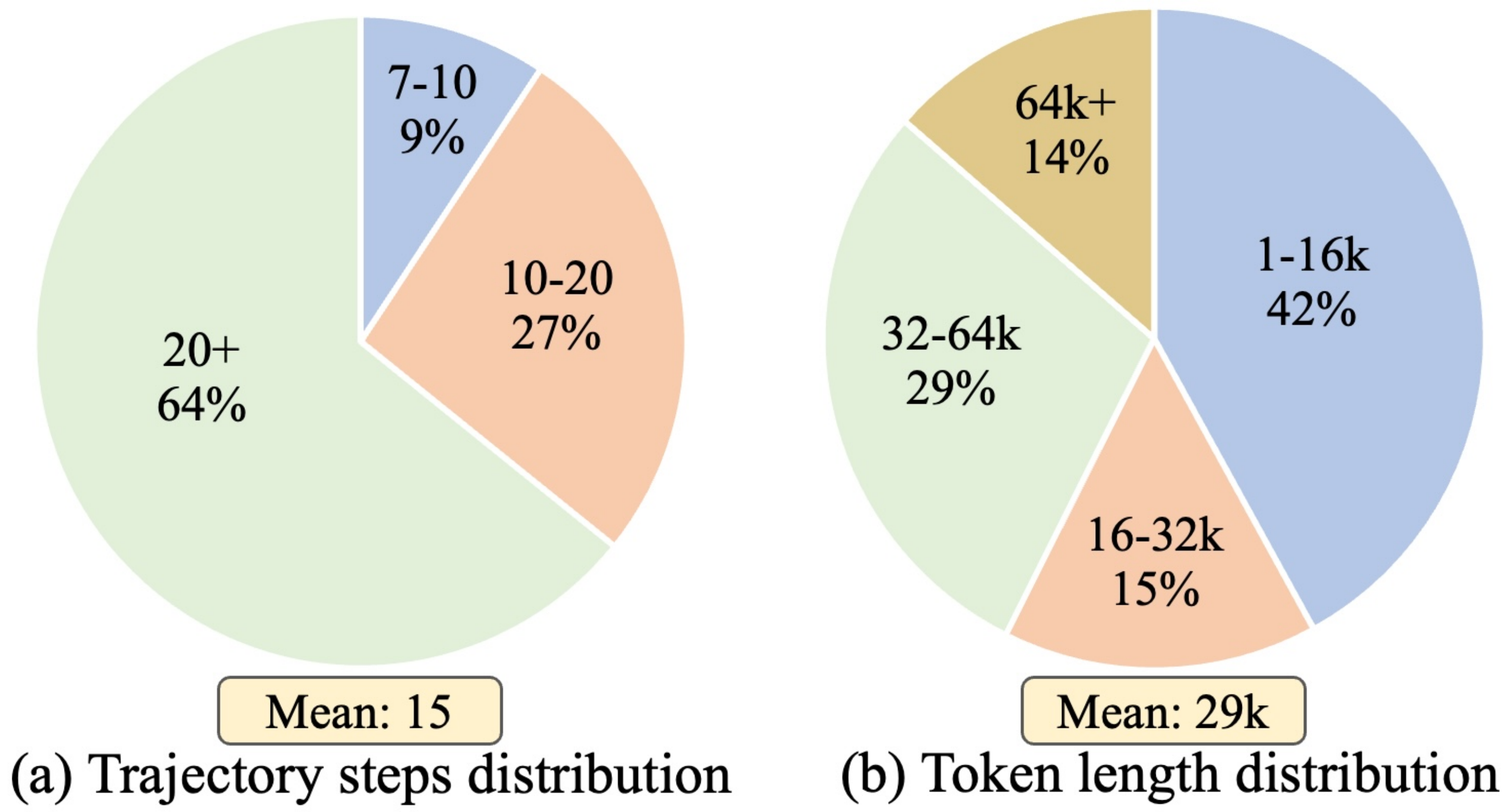}
    \vspace{-2mm}
    \caption{VisGenData-4k dataset statistics.}
    \label{fig:visgendata}
    \vspace{-2mm}
\end{wrapfigure}

\noindent\textbf{Metacognitive Reasoning Process.} The metacognition defines four reasoning types: the \textit{\textless think\textgreater} phase maintains situational awareness and todo-list through continuous state evaluation; the \textit{\textless plan\textgreater} phase constructs executable task sequences by decomposing objectives and dependencies; the \textit{\textless tool\_call\textgreater} phase invokes appropriate tools based on plan blueprints and analytical reasoning; and the \textit{\textless answer\textgreater} phase verifies goal completion, forming a closed-loop execution process.
This Metacognitive reasoning process guides the MetaAgent to generate the UTPC trajectory.

\noindent\textbf{Reference Workflow Integration.} We incorporate 15 predefined workflow templates as best-practice guides, ensuring planning remains flexible yet stays on track. These workflows provide domain-specific execution patterns for various visual creation tasks, representing 15 distinct application scenarios from storyboards to animated short films.

\subsection{Dataset Composition and Statistics}
\label{subsec:dataset_composition}
Fig.\ref{fig:visgendata} shows the statistics of our VisGenData-4k, which exhibits the following key features:
(1) \textbf{Diverse Task Types}: Encompassing 21 distinct task types (including storyboards, marketing posters, product marketing videos, animated short films, etc), this diversity is crucial for training agents to handle a broad range of real-world creative demands, significantly enhancing their adaptability and practical applicability.
(2) \textbf{Complex Trajectory Structure}: With a mean of 15 steps and 64\% of trajectories exceeding 20 steps, this complexity is crucial for training agents to decompose and plan long-horizon tasks, fostering robust problem-solving capabilities in visual creation.
(3) \textbf{Rich Contextual Information}: The substantial token length (mean: 29k, 43\% over 32k) equips agents with the ability to process and utilize extensive contextual cues, significantly enhancing their capacity for detailed and context-aware generation.

\section{Agentic Post-Training}
\label{sec:training}

\subsection{Agentic Framework}
\label{sec:framework}
As shown in Fig.~\ref{fig:framework}, VisionCreator is formulated as a unified agent that integrates \textit{Understanding, Thinking, Planning, and Creation} (UTPC) capabilities to accomplish complex visual generation tasks.
Formally, we model the agent as a policy $\pi_\theta$ operating over long-horizon multimodal trajectories:
$\tau = (o_0, a_0, o_1, a_1, \dots, o_T)$,
where $o_t$ denotes multimodal observations (textual instructions, intermediate tool feedback, and virtual visual states), and $a_t$ denotes agent actions including reasoning tokens, planning steps, and tool invocations.
The training process follows a two-stage agentic post-training paradigm:
(1) \textbf{Progressive Specialization Training (PST)}, which initializes a strong policy prior via supervised learning over expert UTPC trajectories.
(2) \textbf{Virtual Reinforcement Learning (VRL)}, which further optimizes long-horizon planning and tool-use strategies through large-scale exploration in a simulated environment.

\subsection{Progressive Specialization Training}
\label{subsec:pst}

The goal of Progressive Specialization Training (PST) is to learn an
initial policy $\pi_{\theta_0}$ that simultaneously preserves
\textit{general reasoning competence} while acquiring
\textit{domain-specific visual creation ability},
thereby enabling a functional visual content creation agent
rather than a narrowly tuned generator.
Let the supervised dataset be
$\mathcal{D} = \mathcal{D}_{\text{gen}} \cup \mathcal{D}_{\text{vis}}$,
where $\mathcal{D}_{\text{gen}}$ contains large-scale general reasoning
and tool-use trajectories, and $\mathcal{D}_{\text{vis}}$ contains
expert-curated visual creation trajectories (VisGenData-4k).
Standard supervised fine-tuning (SFT) minimizes
\begin{equation}
\mathcal{L}_{\text{SFT}}(\theta)
=
\mathbb{E}_{(o,a)\sim\mathcal{D}}
\left[-\log \pi_\theta(a \mid o)\right].
\end{equation}

However, naive single-stage SFT exhibits two fundamental failure modes.
Training only on $\mathcal{D}_{\text{vis}}$ leads to catastrophic
forgetting of general reasoning and planning ability,
resulting in nearly zero agent competence; empirically,
Tab.~\ref{tab:ablation_study} shows performance dropping to
\textbf{0.007}, indicating the model is unable to function as a
visual creation agent.
Conversely, one-stage mixed SFT on
$\mathcal{D}_{\text{gen}} \cup \mathcal{D}_{\text{vis}}$
avoids catastrophic forgetting but yields
\textit{suboptimal specialization}, since the dominance of
$\mathcal{D}_{\text{gen}}$ suppresses learning of visual-creation
behaviors and degrades downstream agent performance.
These observations reveal a necessary condition for visual agents:
\[
\textbf{General Competence Preservation}
+
\textbf{Strong Visual Agent Specialization},
\]
which neither naive SFT strategies can satisfy simultaneously.

PST resolves this conflict through a controlled two-stage curriculum
that induces a gradual distribution shift.
In Stage~1 (general foundation learning),
\begin{equation}
\mathcal{D}^{(1)}
=
\mathcal{D}_{\text{gen}}^{500\text{K}}
\cup
\lambda\,\mathcal{D}_{\text{vis}},
\end{equation}
establishing robust reasoning, planning, and tool-use capabilities
while lightly anchoring the policy to the visual generation agent domain.
In Stage~2 (targeted specialization),
\begin{equation}
\mathcal{D}^{(2)}
=
\mathcal{D}_{\text{gen}}^{200\text{K}}
\cup
\lambda\,\mathcal{D}_{\text{vis}},
\end{equation}
the increased effective influence of $\mathcal{D}_{\text{vis}}$
drives specialization toward visual content creation,
while continued exposure to $\mathcal{D}_{\text{gen}}$
prevents catastrophic forgetting.
Overall, PST learns a structured initialization
\begin{equation}
\pi_{\theta_0}
\approx
\arg\min_\theta
\mathbb{E}_{(o,a)\sim
\mathcal{D}^{(1)} \rightarrow \mathcal{D}^{(2)}}
\left[-\log \pi_\theta(a \mid o)\right].
\end{equation}
which constrains downstream reinforcement learning (RL)
to a policy region that already satisfies both
general competence and visual specialization.
Experimental results further validate the necessity of PST.
Compared with one-stage SFT, PST achieves
substantially stronger performance on visual creation agent tasks,
demonstrating that progressive specialization is essential for
learning effective UTPC behaviors.
Moreover, PST provides a significantly better initialization for RL:
the initial reward score before RL training increases from
\textbf{0.64} (one-stage SFT) to \textbf{0.87} (PST),
a gain of \textbf{+0.23}.
This improved starting point directly translates into
optimization efficiency—RL convergence is accelerated by
approximately \textbf{50\%}.
These findings confirm that PST not only improves
final agent capability, but also fundamentally reduces
the difficulty of downstream reinforcement learning.

\begin{figure*}[t]
\vspace{-2mm}
\centering
\includegraphics[width=1.0\textwidth]{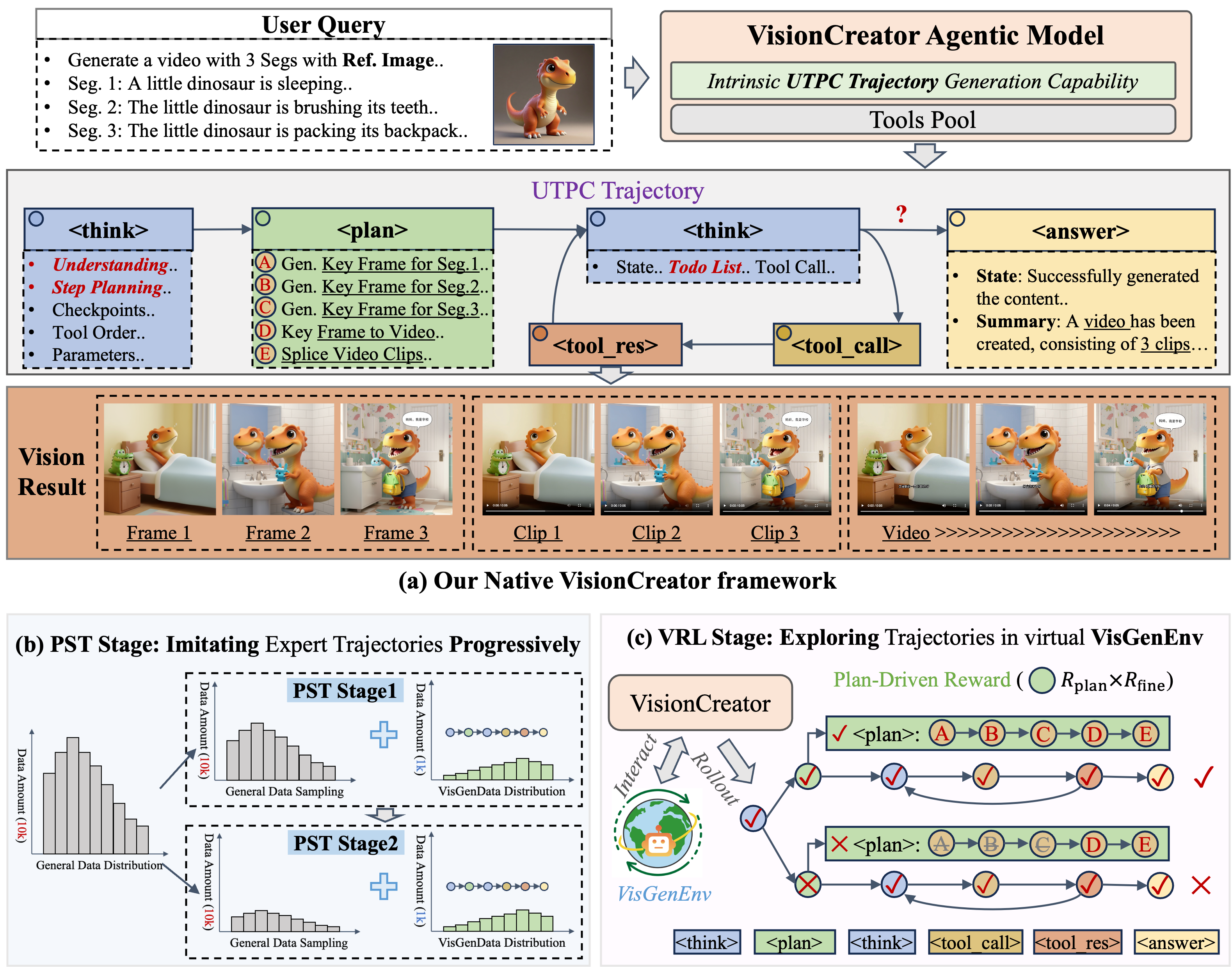}
\vspace{-5mm}
\caption{Our Native VisionCreator framework.}
\vspace{-2mm}
\label{fig:framework}
\end{figure*}

\subsection{Virtual Reinforcement Learning}
\label{subsec:vrl}

Building upon the robust foundation established by PST, we refine the model's UTPC capabilities through Virtual Reinforcement Learning (VRL) based on the GRPO algorithm. 
To enable scalable long-horizon learning without invoking real-world tools, we first construct a high-fidelity virtual environment \textit{VisGenEnv} that simulates the behavior of visual creation tools. Within this environment, LtrReward components are designed to supervise agent trajectories and guide both planning and execution. To understand that policies learned under these rewards transfer effectively to real-world scenarios, we provide a theoretical analysis of VRL. Building upon these insights, we then introduce a plan-driven reward that integrates planning and execution signals to optimize robust long-horizon visual creation performance.

\begin{figure*}[ht]
\vspace{-2mm}
\centering
\includegraphics[width=0.8\textwidth]{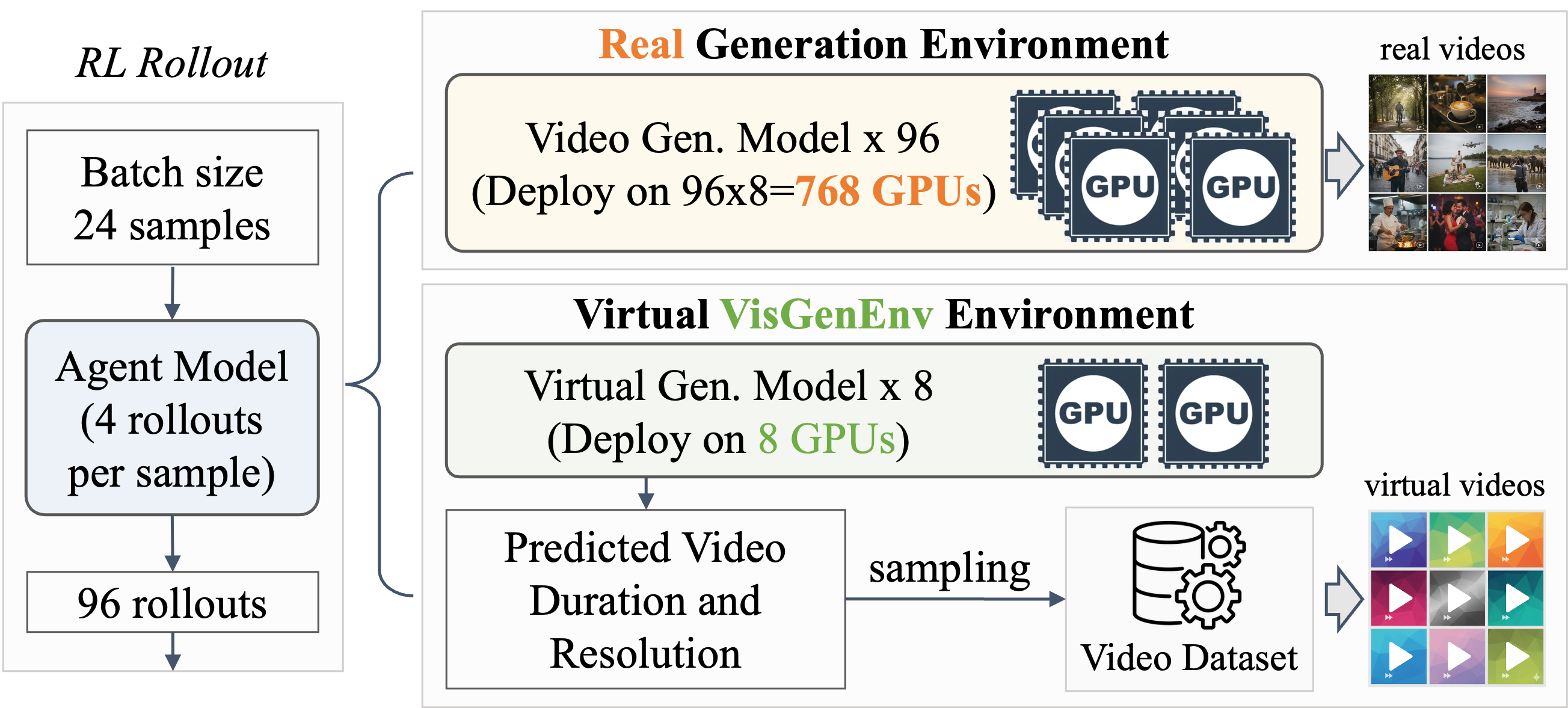}
\vspace{-2mm}
\caption{Comparison of the real environment and our virtual VisGenEnv environment, with an example of using a video generation tool.}
\vspace{-2mm}
\label{fig:virtual_env}
\end{figure*}

\subsubsection{Virtual VisGenEnv Environment}

To enable scalable long-horizon learning without invoking real-world tools, we first construct a high-fidelity virtual environment called \textit{VisGenEnv}. This environment serves as a sandbox where the agent can safely explore planning and tool-use strategies, laying the foundation for subsequent reward design and theoretical analysis.
VisGenEnv integrates a comprehensive suite of \textit{36 visual creation tools} (see Appendix for full list). The core of its design lies in a procedural simulation that accurately replicates the functional logic and behavioral patterns of real tools, including state transitions, parameter validation, and output specifications such as image resolution and video duration.
To simulate multimodal outputs, the environment returns media files randomly sampled from a database while ensuring physically correct attributes consistent with tool specifications. This high-fidelity simulation of tool behaviors enables the agent to effectively learn the causal structure of the workflow and master robust planning policies through extensive practice within the virtual setting.

Training agent models by reinforcement learning in the real environment is prohibitively expensive. As illustrated in Fig.~\ref{fig:virtual_env}, supporting a training batch size of 24 with 4 rollouts (i.e., 96 concurrent rollouts in total) quickly becomes computationally intractable. Video tools are particularly costly: each instance requires 8 GPUs and roughly 30 seconds per video, meaning 96 concurrent rollouts would require $8 \times 96 = 768$ GPUs. Deploying multiple real image and video generation tools would require several thousand GPUs, while our virtual environment VisGenEnv enables long-horizon exploration with only a few GPUs—thus saving thousands of GPU resources.

\begin{figure*}[ht]
\vspace{-2mm}
\centering
\includegraphics[width=0.88\textwidth]{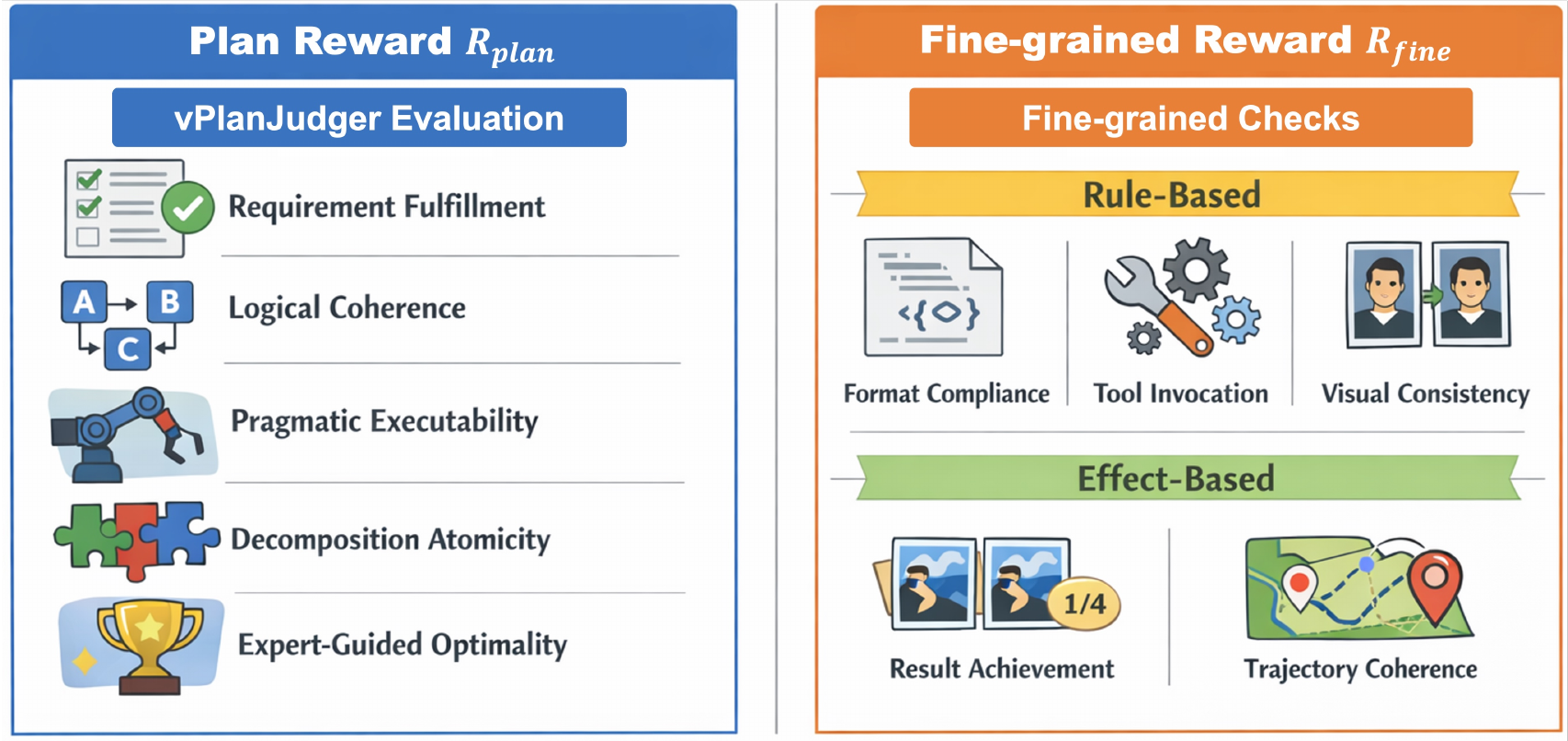}
\vspace{-2mm}
\caption{LtrReward Components.}
\vspace{-2mm}
\label{fig:reward}
\end{figure*}

\subsubsection{LtrReward Components}

With the virtual environment in place, as shown in Fig.~\ref{fig:reward}, we now define LtrReward components $R_{\text{vrt}}$ (i.e., virtual reward applicable to VisGenEnv) as reward signals that guide the agent's learning, which consist of Plan Reward \(R_{\text{plan}}\) and Fine-grained Reward \(R_{\text{fine}}\).

Plan Reward $R_{\text{plan}}$ evaluates the overall quality of the task plan using a proposed \textbf{vPlanJudger}, an expert-informed LLM evaluator that leverages a curated repository of expert reference plans to provide in-context guidance. By performing cross-referenced reasoning between the candidate plan and expert-authored strategies, the vPlanJudger computes a multidimensional alignment score focusing on five key facets: (1) \textit{Requirement Fulfillment}, a binary check on whether the output's modality and quantity align with the user request; (2) \textit{Logical Coherence}, verifying the causal validity of sub-task sequencing; (3) \textit{Pragmatic Executability}, ensuring each step is grounded within the available toolset or LLM capabilities to avoid hallucinatory actions; (4) \textit{Decomposition Atomicity}, which evaluates whether the plan is partitioned into actionable atomic tasks; and (5) \textit{Expert-Guided Optimality}, which rewards task-specific best practices such as identity consistency for multi-shot content, beat-aligned audio-visual synchronization, and the strategic minimization of complexity.

The Fine-grained Reward \(R_{\text{fine}}\) integrates both rule-based and effect-based signals to ensure structurally valid execution and successful task realization. 
Specifically: 
(1) Rule-based components include Format Compliance \(R_{\text{format}}\), which validates UTPC structural correctness via parsing of tags, ordering, content, and JSON validity; Tool Invocation \(R_{\text{tool}}\), which scores execution success with graded penalties for intermediate or final failures; and Visual Consistency \(R_{\text{cons}}\), which rewards appropriate use of reference-based generation when consistency is required. 
(2) Effect-based components include Result Achievement \(R_{\text{result}}\), which verifies output constraints such as image count and video duration within tolerance bounds, and Trajectory Coherence \(R_{\text{traj}}\), which evaluates alignment between planning intent and executed actions through an LLM-evaluator. Together, these rewards provide trajectory-level supervision that encourages correct agentic structure, reliable tool usage, and coherent visual creation outcomes.

\subsubsection{Theoretical Foundations of Virtual Reinforcement Learning}
\label{subsec:theory}

Based on the constructed virtual environment and the LtrReward components, we provide a theoretical analysis to explain the effectiveness of VRL when transferred to real-world execution.
The theoretical legitimacy of VRL rests on its ability to maintain policy efficacy despite the intrinsic discrepancies between virtual simulation and real-world execution. 
Specifically, VRL operates under a \textbf{Rollout Gap}, where the agent lacks real visual feedback to rectify its trajectory, and an \textbf{Objective Inconsistency}, caused by substituting the vision reward $R_{\text{vision}}$ (which measures perceptual quality across multiple visual dimensions) with a structural proxy $R_{\text{result}}$. To evaluate how these discrepancies affect policy transfer, we model the sim-to-real transition as a function of four synergistic variables: (i) \textbf{Tool Capability} ($C_{\text{tool}}$), quantifying the reliability of the generative engine; (ii) \textbf{PST Prior} ($\pi_{\text{pst}}$), anchoring the agent's initial reasoning within a distribution derived from real expert data; (iii) \textbf{Plan Sufficiency} ($\Phi_{\text{plan}}$), measuring the causal link between logical correctness and visual quality; and (iv) \textbf{Result Reward} ($R_{\text{result}}$), ensuring the structural completion of tasks.

The following theorems establish the mathematical foundation of VRL: Theorem~\ref{thm:error_bound} provides an error bound guarantee, proving that the sim-to-real gap remains controllable under the joint constraint of these variables; Theorem~\ref{thm:improvement} characterizes the real-world performance 
gain as a competition between \textit{Causal Improvement} and \textit{Transfer Loss}, showing that VRL yields non-negative improvement 
whenever the causal reward gain dominates the bounded sim-to-real error.

\begin{theorem}[Virtual-to-Real Error Bound]
\label{thm:error_bound}
Let $J_{\text{real}}(\pi)$ and $J_{\text{vrt}}(\pi)$ be the expected returns of policy $\pi$ in real and virtual environments. And $\delta, \alpha, \beta$ are environment-specific scaling factors. The transfer error $\mathcal{E}(\pi) = |J_{\text{real}}(\pi) - J_{\text{vrt}}(\pi)|$ is bounded by:
\begin{equation}
\small
\mathcal{E}(\pi) \le \underbrace{\delta(1 - C_{\text{tool}})}_{\text{Dynamics Gap}} + \underbrace{\alpha \cdot D_{\text{KL}}(\pi_{\text{vrt}} | \pi_{\text{pst}})}_{\text{Action Bias Bound}} + \underbrace{\beta(1 - \Phi_{\text{plan}} \cdot R_{\text{result}})}_{\text{Goal Alignment Error}}
\end{equation}
\end{theorem}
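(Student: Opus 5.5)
The plan is a telescoping decomposition of $J_{\text{real}}(\pi)-J_{\text{vrt}}(\pi)$ through two hybrid objectives, with each of the three resulting differences controlled by one of the quantities in the bound. Take $\pi=\pi_{\text{vrt}}$, the policy returned by VRL. Write $P_{\text{real}}$ and $P_{\text{vrt}}$ for the real and virtual transition kernels. Define $J_{\text{hyb}}(\pi)$ as the expected virtual reward $R_{\text{vrt}}$ of trajectories rolled out under the real dynamics. Define $J_{\text{hyb}}^{\text{pst}}$ as the same quantity evaluated on the PST-induced trajectory distribution. Then split
\[
J_{\text{real}}-J_{\text{vrt}} = \bigl(J_{\text{real}}-J_{\text{hyb}}\bigr) + \bigl(J_{\text{hyb}}-J_{\text{vrt}}\bigr).
\]
The first difference isolates the Objective Inconsistency, since the dynamics are the same and only $R_{\text{vision}}$ versus $R_{\text{vrt}}$ differ. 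The second isolates the Rollout Gap, since the reward is the same and only the environment differs. The second difference is then split again into a pure dynamics part and a part due to the policy drifting from $\pi_{\text{pst}}$.

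\textbf{Dynamics Gap.} I will make an explicit modelling assumption: $C_{\text{tool}}$ is the probability, at each tool call, that the real tool's output agrees with the virtual specification (resolution, duration, success or failure), so that $\mathrm{TV}(P_{\text{real}},P_{\text{vrt}})\le 1-C_{\text{tool}}$ per step. A standard simulation-lemma argument (coupling the two rollouts until the first disagreement) then bounds the pure dynamics part by $H R_{\max}(1-C_{\text{tool}})$, where $H$ is the horizon. I absorb $H R_{\max}$ into $\delta$.

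\textbf{Action Bias Bound.} VisGenEnv returns randomly sampled media, so the observations the agent conditions on differ between the two environments. On the support where the environments disagree, I compare trajectories generated by $\pi_{\text{vrt}}$ with those generated by $\pi_{\text{pst}}$. The PST prior was fit on real expert trajectories, so its behaviour on real observations is already calibrated. Pinsker's inequality gives $\mathrm{TV}\le\sqrt{D_{\text{KL}}/2}$, which is only a square root. To obtain the linear form stated in the theorem, I will instead use a Donsker--Varadhan change of measure: for bounded rewards, $\mathbb{E}_{\pi_{\text{vrt}}}f - \mathbb{E}_{\pi_{\text{pst}}}f \le \alpha D_{\text{KL}}(\pi_{\text{vrt}}\|\pi_{\text{pst}})$ up to terms that vanish under the PST calibration. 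This fixes $\alpha$.

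\textbf{Goal Alignment Error.} I will interpret $\Phi_{\text{plan}}$ as a sufficiency coefficient, and formalize it as an assumption:
\[
\mathbb{E}\bigl[R_{\text{vision}}\mid \text{plan correct},\ R_{\text{result}}\bigr] \ge \Phi_{\text{plan}} R_{\text{result}}.
\]
With both rewards normalized to $[0,1]$, the reward mismatch satisfies $|R_{\text{vision}}-R_{\text{vrt}}|\le \beta\bigl(1-\Phi_{\text{plan}} R_{\text{result}}\bigr)$, where $\beta$ collects the weight of $R_{\text{result}}$ within $R_{\text{vrt}}$ and the range of the rewards. The triangle inequality then combines the three bounds into the stated inequality.

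The main obstacle is the Action Bias term. Pinsker naturally yields a square root of the KL divergence, not the KL itself, and the link between the virtual-real observation mismatch and the KL to $\pi_{\text{pst}}$ is not forced by any earlier result. I would first try the change-of-measure inequality with $\alpha$ chosen in terms of the reward range. If that fails, I would state the bound under the regime $D_{\text{KL}}\ge$ a constant, where $\sqrt{x}\le x$, so that a linear-in-KL bound is valid with an adjusted $\alpha$. All three scaling factors are defined as environment-specific constants, and the proof will make this modelling dependence explicit.
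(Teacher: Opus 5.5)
Your three-term triangle inequality has the same shape as the paper's proof. The paper's intermediate objective uses virtual dynamics with $R_{\text{vision}}$, the reverse of your hybrid, which is immaterial. Your Dynamics step matches its Term I. The step you flag does fail, however, and not only because of the square root.

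A Donsker--Varadhan change of measure with $g=\lambda(f-\mathbb{E}_{\pi_{\text{pst}}}f)$ gives
$\mathbb{E}_{\pi_{\text{vrt}}}f-\mathbb{E}_{\pi_{\text{pst}}}f\le\lambda^{-1}D_{\text{KL}}(\pi_{\text{vrt}}\|\pi_{\text{pst}})+\lambda^{-1}\log\mathbb{E}_{\pi_{\text{pst}}}e^{\lambda(f-\mathbb{E}_{\pi_{\text{pst}}}f)}$.
The last term is nonnegative by Jensen, vanishes only when $f$ is a.s.\ constant, and behaves like $\lambda\,\mathrm{Var}_{\pi_{\text{pst}}}(f)/2$ for small $\lambda$. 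No property of $\pi_{\text{pst}}$ removes it, and optimizing over $\lambda$ just returns Pinsker's $(b-a)\sqrt{D_{\text{KL}}/2}$. In fact no constant $\alpha$ makes a policy-change term linear in the KL: for $\mathrm{Bern}(1/2+\epsilon)$ against $\mathrm{Bern}(1/2)$, with $f$ the indicator of $1$, the mean gap is $\epsilon$ while the KL is about $2\epsilon^2$. Your fallback $D_{\text{KL}}\ge c$ changes the theorem and discards exactly the small-KL regime that PST anchoring is meant to produce.

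Worse, no policy divergence can absorb the mismatch from randomly sampled media, which is the job you assign this term. At $\pi_{\text{vrt}}=\pi_{\text{pst}}$ the KL is $0$, but the content-level gap need not be; invoking PST calibration there amounts to assuming $\mathcal{E}(\pi_{\text{pst}})$ away. This exposes a tension in your Dynamics step:
\begin{itemize}
\item If $\mathrm{TV}(P_{\text{real}},P_{\text{vrt}})\le 1-C_{\text{tool}}$ holds for the full kernel, media content included, then the simulation lemma applied to $\pi_{\text{vrt}}$ itself already bounds all of $J_{\text{hyb}}-J_{\text{vrt}}$. No detour through $\pi_{\text{pst}}$ is needed, and $\alpha D_{\text{KL}}\ge 0$ is free slack.
\item If the TV bound holds only for specification-level attributes, the content-level gap is left uncontrolled.
\end{itemize}
The paper never derives this term. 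It inserts a PST-anchored ideal virtual return and simply postulates the linear bound as Assumption A.3, while A.1 and A.2 postulate per-step versions of the other two terms. You should either adopt that assumption explicitly or argue that the term is unnecessary.

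Your Goal Alignment step also does not follow from what you assume. With $R_{\text{vision}}\in[0,1]$, the bound $\mathbb{E}[R_{\text{vision}}\mid\text{plan correct},R_{\text{result}}]\ge\Phi_{\text{plan}}R_{\text{result}}$ does give $|\mathbb{E}[R_{\text{vision}}\mid\cdot]-\Phi_{\text{plan}}R_{\text{result}}|\le 1-\Phi_{\text{plan}}R_{\text{result}}$. But your first difference compares $R_{\text{vision}}$ with $R_{\text{vrt}}=R_{\text{plan}}\cdot[R_{\text{tool}}+R_{\text{format}}+R_{\text{result}}+R_{\text{traj}}+R_{\text{cons}}]$. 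Nothing ties $R_{\text{plan}}$ or the other four components to $1-\Phi_{\text{plan}}R_{\text{result}}$. At $\Phi_{\text{plan}}R_{\text{result}}=1$ your claimed inequality would force $R_{\text{vision}}=R_{\text{vrt}}$ exactly, so no rescaling of $\beta$ helps. In addition, trajectories with an incorrect plan, about which you assume nothing, still contribute to $J$. The paper sidesteps this only by stating A.2 directly against $\Phi_{\text{plan}}R_{\text{result}}$, which it treats as the core part of $R_{\text{vrt}}$. To close your argument you need an explicit assumption bounding $|R_{\text{vision}}-R_{\text{vrt}}|$ by a multiple of $1-\Phi_{\text{plan}}R_{\text{result}}$ on all trajectories.
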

Theorem \ref{thm:error_bound} quantifies how the sim-to-real divergence is suppressed: (i) \textit{Dynamics Gap} is minimized by $C_{\text{tool}}$, ensuring virtual procedural logic mirrors real API behavior; (ii) \textit{Action Bias Bound} is constrained by the PST prior, which prevents policy drift in the absence of real visual feedback by maintaining consistency with expert decision-making; (iii) \textit{Goal Alignment Error} is mitigated by the coupling of $\Phi_{\text{plan}}$ and $R_{\text{result}}$, ensuring the virtual completion objective serves as a reliable proxy for real-world success.

\begin{theorem}[Real-World Improvement of VRL]
\label{thm:improvement}
Under the error bound $\mathcal{E}$, the real-world performance gain depends on the dominance of Causal Improvement over Transfer Loss:
\begin{equation}
J_{\text{real}}(\pi_{\text{VRL}}) - J_{\text{real}}(\pi_{\text{pst}}) \ge \underbrace{\Gamma \cdot \mathbb{E}_{\pi} [\Delta R_{\text{vrt}}]}_{\text{Causal Improvement}} - \underbrace{\mathcal{E}(\pi)}_{\text{Transfer Loss}}
\end{equation}
where $\Gamma = C_{\text{tool}} \cdot \Phi_{\text{plan}} \cdot \kappa(\pi_{\text{pst}})$ is the effectiveness coefficient, and $\kappa(\pi_{\text{pst}})$ denotes the anchoring strength of the PST prior in constraining policy exploration. Virtual reward $R_{\text{vrt}}$ consisting of $R_{\text{plan}}$ and $R_{\text{fine}}$, and $\mathbb{E}_{\pi} [\Delta R_{\text{vrt}}]$ denotes the expected increment of virtual reward, representing the agent's logic optimization in planning and execution.
\end{theorem}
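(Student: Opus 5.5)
The plan is a three-term telescoping decomposition that feeds Theorem~\ref{thm:error_bound} into the real-world gain. Write
\[
J_{\text{real}}(\pi_{\text{VRL}}) - J_{\text{real}}(\pi_{\text{pst}})
= \bigl[J_{\text{real}}(\pi_{\text{VRL}}) - J_{\text{vrt}}(\pi_{\text{VRL}})\bigr]
+ \bigl[J_{\text{vrt}}(\pi_{\text{VRL}}) - J_{\text{vrt}}(\pi_{\text{pst}})\bigr]
+ \bigl[J_{\text{vrt}}(\pi_{\text{pst}}) - J_{\text{real}}(\pi_{\text{pst}})\bigr].
\]
The first and third brackets are transfer terms. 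Each is bounded below by $-\mathcal{E}(\cdot)$ via Theorem~\ref{thm:error_bound}.

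The third bracket only needs a bound at $\pi=\pi_{\text{pst}}$. There the Action Bias term $\alpha D_{\text{KL}}(\pi_{\text{pst}}\,|\,\pi_{\text{pst}})$ vanishes, so its error is at most the other two terms. Those two terms are common to both policies, so they are dominated by $\mathcal{E}(\pi_{\text{VRL}})$'s bound. After absorbing constants into $\delta,\beta$, the two transfer terms combine into a single Transfer Loss, written $\mathcal{E}(\pi)$ in the statement with $\pi=\pi_{\text{VRL}}$.

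The second bracket is the Causal Improvement, and I would lower-bound it by $\Gamma\,\mathbb{E}_\pi[\Delta R_{\text{vrt}}]$. Here $J_{\text{vrt}}$ is the return measured against the true objective, but evaluated on virtual rollouts. The policy, however, is trained on the proxy $R_{\text{vrt}}=R_{\text{plan}}+R_{\text{fine}}$. So the key modelling step is a \emph{causal transmission} assumption: an increment in proxy reward passes into the true objective through a multiplicative chain.
\begin{itemize}[leftmargin=1.5em]
\item An increment in $R_{\text{plan}}$ or $R_{\text{fine}}$ raises logical plan quality.
\item Plan quality raises visual quality with efficiency $\Phi_{\text{plan}}$, by the definition of Plan Sufficiency.
\item This is realized by the generative engine with reliability $C_{\text{tool}}$.
\item It is only trustworthy inside the trust region where the policy stays anchored to the PST prior. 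That anchoring is measured by $\kappa(\pi_{\text{pst}})$: the fraction of the improvement retained when exploration is constrained near $\pi_{\text{pst}}$, in the spirit of the performance-difference lemma under a KL constraint.
\end{itemize}
Formally, I would posit $\mathbb{E}[\Delta R_{\text{true}}\mid \Delta R_{\text{vrt}}]\ge C_{\text{tool}}\Phi_{\text{plan}}\kappa\,\Delta R_{\text{vrt}}$. Taking expectations over $\pi$ then gives the bracket $\ge \Gamma\,\mathbb{E}_\pi[\Delta R_{\text{vrt}}]$.

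Adding the three bounds yields the stated inequality. The corollary that the gain is non-negative whenever $\Gamma\,\mathbb{E}_\pi[\Delta R_{\text{vrt}}]\ge\mathcal{E}$ is then immediate.

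The main obstacle is the second bracket. Nothing earlier in the paper defines $\Phi_{\text{plan}}$, $C_{\text{tool}}$ or $\kappa$ precisely enough to derive the multiplicative factor. It must therefore be stated as a structural assumption, namely monotone, factorizable transmission of proxy gains. My first attempt would be to define $\kappa(\pi_{\text{pst}})$ as exactly the ratio needed: the infimum, over policies within the KL ball used in Theorem~\ref{thm:error_bound}, of true-gain divided by $C_{\text{tool}}\Phi_{\text{plan}}$ times proxy-gain. This makes the step hold by construction, and the difficulty moves into justifying that $\kappa>0$.

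A secondary subtlety is merging the two transfer terms into one $\mathcal{E}$ rather than $2\mathcal{E}$. If the merge fails, I would accept a factor of $2$ absorbed into the environment constants.
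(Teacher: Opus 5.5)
Your skeleton matches the paper's: apply Theorem~\ref{thm:error_bound} at both endpoints, then insert $\Gamma$ in the middle term. However, the step you call the main obstacle comes from misreading $J_{\text{vrt}}$. Throughout, $J_{\text{vrt}}$ is the return of the proxy $R_{\text{vrt}}$ under VisGenEnv dynamics. That is exactly why Theorem~\ref{thm:error_bound} carries a Goal Alignment Error comparing $R_{\text{vision}}$ with the proxy. The paper also defines $\Delta_{\text{vrt}}=J_{\text{vrt}}(\pi_{\text{VRL}})-J_{\text{vrt}}(\pi_{\text{pst}})=\mathbb{E}_\pi[\Delta R_{\text{vrt}}]$. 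Your ``true objective evaluated on virtual rollouts'' is the paper's intermediate $J_{\text{hybrid}}$ (virtual dynamics, reward $R_{\text{vision}}$) from the proof of Theorem~\ref{thm:error_bound}. It carries little meaning here anyway, since VisGenEnv returns randomly sampled media. Read correctly, your middle bracket equals $\Delta_{\text{vrt}}$ by definition, and inserting $\Gamma$ costs nothing. The paper takes $C_{\text{tool}},\Phi_{\text{plan}}\in[0,1]$ and $\kappa(\pi_{\text{pst}})\in(0,1]$, so $\Gamma\le 1$. Hence $\Delta_{\text{vrt}}\ge\Gamma\,\Delta_{\text{vrt}}$ whenever $\Delta_{\text{vrt}}\ge 0$, which is the regime in question because VRL ascends $J_{\text{vrt}}$. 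The paper's Step~3.2 presents this as ``assuming a monotonic relationship'', with the same $C_{\text{tool}}\cdot\Phi_{\text{plan}}\cdot\kappa$ motivation you give, but mathematically it is only this weakening. Your causal-transmission hypothesis, the infimum-ratio definition of $\kappa$, and the worry about $\kappa>0$ are all unnecessary.

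The genuine difficulty is the one you call secondary. Your telescoping is the valid way to combine the endpoint bounds. The paper instead subtracts the two same-direction lower bounds $J_{\text{real}}\ge J_{\text{vrt}}-\mathcal{E}$, which is not a legitimate operation. It obtains $-[\mathcal{E}(\pi_{\text{VRL}})-\mathcal{E}(\pi_{\text{pst}})]$ and then drops $\mathcal{E}(\pi_{\text{pst}})$ on the grounds that PST was trained on real expert data. The correct combination is your $-\mathcal{E}(\pi_{\text{VRL}})-\mathcal{E}(\pi_{\text{pst}})$, and Theorem~\ref{thm:error_bound} cannot reduce this to the literal $\mathcal{E}(\pi_{\text{VRL}})$. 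Your domination step bounds $\mathcal{E}(\pi_{\text{pst}})$ by the right-hand side of Theorem~\ref{thm:error_bound}, not by $\mathcal{E}(\pi_{\text{VRL}})$. Even that step needs the Goal Alignment term to coincide at both policies, i.e.\ $R_{\text{result}}$ must be policy-independent, which sits badly with VRL raising structural completion. So you have two honest options. First, you can state the result with Transfer Loss $\mathcal{E}(\pi_{\text{VRL}})+\mathcal{E}(\pi_{\text{pst}})$, bounded by the sum of the two right-hand sides of Theorem~\ref{thm:error_bound}. Second, you can add an explicit hypothesis: either the paper's $\mathcal{E}(\pi_{\text{pst}})\approx 0$, or the weaker one-sided condition $J_{\text{real}}(\pi_{\text{pst}})\le J_{\text{vrt}}(\pi_{\text{pst}})$. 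The one-sided condition makes your third bracket nonnegative and yields the stated inequality exactly, with $\pi=\pi_{\text{VRL}}$.
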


The practical transferability of VRL is validated by the convergence behavior in our experiments, where the agent achieves an average virtual reward exceeding \textbf{95\%}. This saturation of total virtual reward $R_{\text{vrt}}$ indicates that the \textbf{Causal Improvement} term is maximized, providing a substantial logical buffer to offset transfer discrepancies. By substituting these empirical results into Theorem \ref{thm:error_bound}, we observe that the \textit{Action Bias Bound} is strictly suppressed by the PST prior, while the \textit{Goal Alignment Error} is mitigated by the coupling of $\Phi_{\text{plan}}$ and $R_{\text{result}}$, remaining stable as the agent masters structural completion. Consequently, the \textbf{Transfer Loss} $\mathcal{E}(\pi)$ is primarily governed by the \textit{Dynamics Gap} $\delta(1 - C_{\text{tool}})$. 
This reveals a critical insight: VRL efficacy is fundamentally a function of generative tool quality. As $C_{\text{tool}}$ increases—meaning the underlying visual creation tools become more reliable and follow procedural logic more closely—the transfer loss diminishes, allowing the massive logical gains from virtual training to translate effectively into superior real-world visual quality. 
Therefore, we derive the following corollary:
\begin{corollary}[Fidelity-Anchored Transfer]
\label{cor:fidelity}
Provided the virtual reward $R_{\text{vrt}}$ reaches a near-optimal level, the real-world gain of VRL is  monotonically non-decreasing with respect to $C_{\text{tool}}$.
\end{corollary}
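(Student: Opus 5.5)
The plan is to read the corollary as a statement about the lower bound of Theorem~\ref{thm:improvement}, with the transfer loss controlled by Theorem~\ref{thm:error_bound}. Throughout, $C_{\text{tool}}$ is the only varying quantity. The prior $\pi_{\text{pst}}$, the plan sufficiency $\Phi_{\text{plan}}$, the result reward $R_{\text{result}}$, the anchoring strength $\kappa(\pi_{\text{pst}})$, and the scaling factors $\delta,\alpha,\beta\ge 0$ are held fixed. Under this reading, ``real-world gain'' means the certified gain
\[
G(C_{\text{tool}}) := \Gamma\,\mathbb{E}_{\pi}[\Delta R_{\text{vrt}}] - \mathcal{E}(\pi).
\]
I would say explicitly that this is the bound, not $J_{\text{real}}$ itself, because the theorems give no handle on the exact real return.

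The first step is to replace $\mathcal{E}(\pi)$ by its upper bound from Theorem~\ref{thm:error_bound}, which keeps the inequality valid:
\[
G(C_{\text{tool}}) \ge C_{\text{tool}}\,\Phi_{\text{plan}}\,\kappa(\pi_{\text{pst}})\,\mathbb{E}_{\pi}[\Delta R_{\text{vrt}}] - \delta(1-C_{\text{tool}}) - \alpha D_{\text{KL}}(\pi_{\text{vrt}}\,|\,\pi_{\text{pst}}) - \beta(1-\Phi_{\text{plan}} R_{\text{result}}) =: \underline{G}(C_{\text{tool}}).
\]
The two $C_{\text{tool}}$-independent error terms are constants. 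The result is an affine function of $C_{\text{tool}}$ with slope
\[
\Phi_{\text{plan}}\,\kappa(\pi_{\text{pst}})\,\mathbb{E}_{\pi}[\Delta R_{\text{vrt}}] + \delta.
\]

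The second step is where the near-optimality hypothesis enters. If $R_{\text{vrt}}$ is near-optimal, then VRL has not decreased the virtual reward relative to $\pi_{\text{pst}}$, so $\mathbb{E}_{\pi}[\Delta R_{\text{vrt}}]\ge 0$. Since $\Phi_{\text{plan}},\kappa\ge 0$ and $\delta\ge0$, the slope is nonnegative. Hence $\underline{G}$ is monotonically non-decreasing in $C_{\text{tool}}$, and it is strictly increasing whenever $\delta>0$ or the causal term is positive. The consequence is that the certified real-world gain of VRL can only grow as tool fidelity improves.

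The main obstacle is the modeling step, not the algebra. I must check two things. First, the other quantities must not secretly depend on $C_{\text{tool}}$. The risky one is $D_{\text{KL}}(\pi_{\text{vrt}}\,|\,\pi_{\text{pst}})$: the VRL policy is trained in the virtual environment, so I would argue that it is independent of the real tools' capability. Second, near-optimality must actually yield $\mathbb{E}[\Delta R_{\text{vrt}}]\ge0$. I would take this as the formal content of ``near-optimal'', namely that $R_{\text{vrt}}(\pi_{\text{VRL}})\ge R_{\text{vrt}}(\pi_{\text{pst}})$, as in the reported rise from $0.87$ to above $0.95$. If instead one wanted monotonicity of the true gain $J_{\text{real}}(\pi_{\text{VRL}})-J_{\text{real}}(\pi_{\text{pst}})$, an extra assumption would be needed, namely that $J_{\text{real}}$ is monotone in tool capability. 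I would flag that this does not follow from the stated theorems.
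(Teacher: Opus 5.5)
Your proposal is correct and is essentially the paper's own argument: you substitute the Theorem~\ref{thm:error_bound} bound for the transfer loss in Theorem~\ref{thm:improvement}, treat the action-bias and goal-alignment terms as fixed, and observe that $C_{\text{tool}}$ enters only through $\Gamma$ and the dynamics gap $\delta(1-C_{\text{tool}})$, both of which push the bound up once the virtual gain is nonnegative. Your explicit slope $\Phi_{\text{plan}}\kappa(\pi_{\text{pst}})\mathbb{E}_{\pi}[\Delta R_{\text{vrt}}]+\delta$ makes precise what the paper states only informally, as does your caveat that the monotone quantity is the explicit bound $\underline{G}$ (not $G$, which still contains the true error, and not $J_{\text{real}}$ itself).
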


\subsubsection{Plan-Driven Reward Design}
\label{subsec:plan_driven_reward}
Theorems~\ref{thm:error_bound} and~\ref{thm:improvement} indicate that real-world improvement critically depends on planning quality. Motivated by this insight, we adopt a plan-driven reward that enforces causal dependency between planning and execution:
\begin{align}
R_{\text{vrt}} = R_{\text{plan}} \times R_{\text{fine}} [R_{\text{tool}} + R_{\text{format}} + R_{\text{result}} + R_{\text{traj}} + R_{\text{cons}}].
\end{align}
Here, \(R_{\text{plan}}\) measures plan correctness, while \(R_{\text{fine}}\) captures execution-level structural validity. The multiplicative coupling ensures that execution alone cannot achieve high reward without a valid plan, and maximal reward is obtained only when a correct plan is faithfully executed. This mechanism directly aligns with Theorem~\ref{thm:improvement}, promoting robust long-horizon planning and tool-use strategies within virtual training.

\section{Experiment}
\label{sec:exp}

\subsection{VisGenBench}
\label{sec:benchmarks}
Existing video generation benchmark VBench-2.0~\cite{zheng2025vbench} has made significant contributions to evaluating the quality of individual-generated videos.
But it lacks the capability to evaluate \textit{multi-step visual creation trajectories} that involve complex tool invocation and long-horizon planning.
While ComfyBench~\cite{xue2025comfybench} attempts to assess multi-step trajectories, it is specifically designed for ComfyUI~\cite{comfyanonymous2023comfyui} and evaluates agent performance based solely on ComfyUI execution success, making it unsuitable for general API-based tool invocation scenarios.
To address this critical gap, we introduce \textbf{VisGenBench}, a comprehensive benchmark designed for evaluating \textit{visual generation agentic models} that operate through multi-step tool invocation to accomplish complex image and video creation tasks.

\begin{table}[htbp]
\centering
\renewcommand{\tabcolsep}{0.8pt}
\renewcommand{\arraystretch}{1.1}
\small
\caption{Test dataset composition of VisGenBench, with 400 image tasks and 800 video tasks.}
\label{tab:visgenbench_stats}
\vspace{-2mm}
\begin{tabular}{lccccccccc}
\toprule
\multirow{2}{*}{\textbf{Type}} & \textbf{Content} & \textbf{Content} & \textbf{Object} & \textbf{Scene} & \textbf{Style} & \multirow{2}{*}{\textbf{Variety}} & \textbf{Visual} & \textbf{Video} & \textbf{Video} \\
& \textbf{Creative} & \textbf{Match} & \textbf{Consistency} & \textbf{Consistency} & \textbf{Consistency} & & \textbf{Amount} & \textbf{Duration} & \textbf{Storyboard} \\
\midrule
\textbf{Image Tasks} & 50 & 50 & 50 & 50 & 50 & 50 & 100 & -- & -- \\
\textbf{Video Tasks} & 50 & 50 & 50 & 50 & 50 & 50 & 100 & 200 & 200 \\
\hline
\textbf{Total} & 100 & 100 & 100 & 100 & 100 & 100 & 200 & 200 & 200 \\
\bottomrule
\end{tabular}
\end{table}

\subsubsection{Test Dataset Composition}
 
As shown in Tab.~\ref{tab:visgenbench_stats}, the VisGenBench consists of a total of 1.2k test samples, including 400 image-generation tasks and 800 video-generation tasks.
Each task is designed to reflect multi-step creation trajectories, requiring to generation of many images and videos.
The benchmark spans 10 evaluation dimensions and covers 35+ real-world application scenarios, encompassing domains such as advertising, storytelling, entertainment, animation, etc.

\subsubsection{Evaluation Framework}
The VisGenBench evaluation framework integrates both objective and subjective assessments to measure an agent’s ability to perform multi-step visual generation tasks.

\noindent\textbf{Objective Evaluation}
Objective evaluation focuses on quantifiable and automatically measurable aspects of the generated content. Specifically, it consists of two components: (1) \textit{Success Rate:} Measures whether the model successfully returns valid images/videos when requested by user. A generation containing the correct modality is counted as \textit{Success}. (2) \textit{Basic Visual Attributes:} Quantitative evaluation of the generated results, including visual quantity, video storyboard count, and video duration. These attributes are automatically assessed using standardized tools.

\noindent\textbf{Subjective Evaluation}
Subjective aspects such as visual consistency, diversity, storytelling quality, and audio perception cannot be fully captured through traditional metrics. We therefore introduce a VLM-Grader with pre-defined fine-grained scoring rubrics, implemented using the Gemini2.5-Pro model. For each subjective evaluation dimension, we define a tailored \textit{meta evaluation list}—a structured rubric containing detailed scoring items (e.g., character consistency, style coherence, narrative flow, audio synchronization, etc). Gemini2.5-Pro provides a meta-evaluation score for each meta-item, and the aggregated score forms the overall result for that dimension. To align automated scoring with human judgment, we calibrate Gemini2.5-Pro's meta-evaluation intensity on VisGenBench. This ensures that both mean scores and relative rankings evaluated by Gemini2.5-Pro remain consistent with expert human assessments, achieving a human-aligned evaluation process.

\begin{table*}[htbp]
\centering
\renewcommand{\tabcolsep}{0.8pt}
\renewcommand{\arraystretch}{1.1}
\small
\caption{Comparisons on VisGenBench by VLM Evaluation. S-Rate: Success Rate, O-Score: Overall Score. The \colorbox{lightpink!90}{best} and \colorbox{lightblue!40}{second-best} results are highlighted.}
\label{tab:visgenbench}
\vspace{-3mm}
\begin{tabular}{lccccccccccc}
\toprule
\textbf{Method} & \textbf{Creative} & \textbf{Match} & \textbf{Object} & \textbf{Scene} & \textbf{Style} & \textbf{Variety} & \textbf{Amount} & \textbf{Duration} & \textbf{Storyboard} & \textbf{S-Rate} & \textbf{O-Score} \\
\midrule
GPT-5 & {0.683} & 0.641 & 0.593 & 0.579 & \textbf{0.638} & {0.232} & \textbf{0.620} & 0.263 & {0.660} & 0.863 & 0.577 \\
Gemini2.5-Pro & \textbf{0.777} & \textbf{0.802} & {0.625} & {0.602} & 0.573 & \textbf{0.345} & {0.540} & {0.376} & \textbf{0.700} & \colorbox{lightpink!90}{0.933} & \colorbox{lightpink!90}{0.627} \\
\hline
Qwen3-VL-8B-Tk & 0.104 & 0.078 & 0.100 & 0.065 & 0.109 & 0.014 & 0.160 & 0.034 & 0.040 & 0.142 & 0.085 \\
\textbf{VisionCreator-8B} & 0.651 & {0.661} & \textbf{0.645} & \textbf{0.638} & {0.595} & 0.211 & 0.480 & \textbf{0.429} & 0.580 & \colorbox{lightblue!40}{0.925} & \colorbox{lightblue!40}{0.581} \\
\bottomrule
\end{tabular}
\vspace{-3mm}
\end{table*}

\begin{table}[htbp]
\centering
\renewcommand{\tabcolsep}{3.5pt}
\renewcommand{\arraystretch}{1.1}
\small
\caption{Comparisons on VisGenBench by Human Evaluation. All models use the new version system prompt, which differs from Tab.\ref{tab:visgenbench}. Overall Score = (Success Rate of Image $\times$ Human Evaluation of Image $+$ Success Rate of Video $\times$ Human Evaluation of Video)$/$2. The performance comparisons of all detailed human evaluation dimensions are shown in Fig.~\ref{fig:radar}.}
\vspace{-2mm}
\begin{tabular}{lccccc}
\toprule
\multirow{2}{*}{\textbf{Model}} & \multicolumn{2}{c}{\textbf{Success Rate}} & \multicolumn{2}{c}{\textbf{Human Evaluation}} & \multirow{2}{*}{\textbf{Overall Score}} \\ \cline{2-5}
& \textbf{Image} & \textbf{Video} & \textbf{Image} & \textbf{Video} & \\ 
\hline
GPT-5 & 95.95\% & 93.00\% & 3.52 & 3.25 & 3.19 \\ 
Gemini2.5-Pro & 91.00\% & 84.00\% & 3.53 & 3.35 & 3.01 \\ 
\hline
Qwen3-VL-32B-Thinking & 97.00\% & 93.00\% & 3.47 & 3.23 & 3.18 \\ 
Qwen3-VL-32B-RL & 91.00\% & 87.00\% & 3.51 & 3.40 & 3.07 \\
Qwen3-VL-32B-SFT & 96.00\% & 94.00\% & 3.53 & 3.37 & 3.27 \\
VisionCreator-32B & \colorbox{lightpink!90}{99.00\%} & \colorbox{lightpink!90}{96.00\%} & \colorbox{lightpink!90}{3.53} & \colorbox{lightpink!90}{3.49} & \colorbox{lightpink!90}{3.42} \\ 
\bottomrule
\end{tabular}
\label{tab:human_eval}
\end{table}

\subsection{Results on VisGenBench by VLM Evaluation}
\label{subsec:visgenbench}
As shown in Tab.~\ref{tab:visgenbench}, our VisionCreator-8B demonstrates remarkable performance that is highly competitive with much larger commercial models (GPT-5 and Gemini2.5-Pro), while significantly outperforming its base model Qwen3-VL-8B-Thinking. The key findings highlight several advantages of our approach:
(1) \textit{Superior Success Rate and Reliability:} VisionCreator-8B achieves an impressive success rate of 0.925, surpassing GPT-5 (0.863) and approaching Gemini2.5-Pro (0.933). This demonstrates the effectiveness of our UTPC framework in ensuring task completion reliability, a crucial requirement for practical visual creation applications.
(2) \textit{Exceptional Consistency Performance:} VisionCreator-8B achieves the highest scores in object consistency (0.645) and scene consistency (0.638) among all compared models, including the much larger Gemini2.5-Pro and GPT-5. This validates our model's strong capability in maintaining visual coherence throughout multi-step creation processes, a core benefit of the native agentic architecture.
(3) The results validate our core hypothesis: a specialized native visual creation agent, even with significantly fewer parameters, can achieve performance competitive with general-purpose commercial giants through targeted architectural design and training methodology. VisionCreator's particular strengths in success rate and consistency metrics underscore the practical advantages of our UTPC framework for real-world visual content creation applications.

\begin{table*}[t]
\centering
\renewcommand{\tabcolsep}{1.8pt}
\renewcommand{\arraystretch}{1.1}
\small
\caption{Ablation study with VisionCreator-8B on VisGenBench-104 comparing different training strategies. VisGenBench-104 is a sampled subset of VisGenBench. Model configurations: 
\textbf{RL1}: PST + Result+Format reward); 
\textbf{RL2}: PST + Plan×(Result+Format) reward; 
\textbf{RL3}: Qwen3-VL + Plan×(Result+Format) reward; 
\textbf{RL4}: PST + Plan×Fine reward; 
\textbf{v1}: 3×VisGenData-4k; 
\textbf{v2}: 3×VisGenData-4k + General-1M; 
\textbf{v3}: 20×VisGenData-4k + General-1M; 
\textbf{v4}: PST + 3×VisGenData-4k + General-1\%; 
\textbf{v5}: PST + 3×VisGenData-4k + General-5\%; 
\textbf{v6}: PST + 3×VisGenData-4k + General-10\%; 
\textbf{v7}: PST + 3×VisGenData-4k + General-20\%.}
\label{tab:ablation_study}
\vspace{-3mm}
\begin{tabular}{lccccccccccc}
\toprule
\textbf{Method} & \textbf{Creative} & \textbf{Match} & \textbf{Object} & \textbf{Scene} & \textbf{Style} & \textbf{Variety} & \textbf{Amount} & \textbf{Duration} & \textbf{Storyboard} & \textbf{S-Rate} & \textbf{O-Score} \\
\midrule
\textbf{RL1} & 0.534 & \textbf{0.817} & \textbf{0.694} & 0.547 & \textbf{0.579} & 0.249 & \textbf{1.000} & 0.397 & 0.625 & 0.904 & 0.634 \\
\textbf{RL2} & 0.579 & 0.808 & 0.677 & 0.479 & 0.558 & \textbf{0.265} & 0.800 & 0.478 & \textbf{0.875} & \colorbox{lightpink!90}{0.942} & 0.644 \\
\textbf{RL3} & \textbf{0.671} & 0.674 & 0.621 & 0.622 & 0.555 & 0.217 & 0.800 & 0.513 & 0.750 & 0.885 & 0.631 \\
\textbf{RL4} & 0.573 & 0.794 & 0.672 & \textbf{0.696} & 0.569 & 0.150 & \textbf{1.000} & \textbf{0.534} & 0.625 & 0.925 & \colorbox{lightpink!90}{0.654} \\
\hline
\textbf{v1} & 0.000 & 0.050 & 0.000 & 0.000 & 0.000 & 0.000 & 0.000 & 0.000 & 0.000 & 0.019 & 0.007 \\
\textbf{v2} & 0.230 & 0.334 & 0.382 & 0.339 & 0.396 & 0.163 & 0.600 & 0.134 & 0.500 & 0.490 & 0.357 \\
\textbf{v3} & 0.262 & 0.422 & 0.300 & 0.260 & 0.473 & 0.068 & 0.600 & 0.100 & 0.250 & 0.481 & 0.322 \\
\textbf{v4} & 0.283 & 0.468 & 0.399 & 0.295 & 0.318 & 0.000 & 0.600 & 0.183 & 0.000 & 0.442 & 0.299 \\
\textbf{v5} & 0.266 & 0.361 & 0.366 & 0.246 & 0.201 & 0.084 & 0.200 & 0.029 & 0.125 & 0.490 & 0.237 \\
\textbf{v6} & 0.239 & 0.310 & 0.326 & 0.194 & 0.273 & 0.149 & 0.600 & 0.098 & 0.125 & 0.413 & 0.273 \\
\textbf{v7} & 0.420 & 0.701 & 0.430 & 0.447 & 0.430 & 0.028 & 0.600 & 0.344 & 0.375 & \colorbox{lightpink!90}{0.625} & \colorbox{lightpink!90}{0.440} \\
\bottomrule
\end{tabular}
\vspace{-2mm}
\end{table*}

\subsection{Results on VisGenBench by Human Evaluation}
\label{subsec:human_eval_analysis}

In addition to automated VLM-based evaluation (Tab.~\ref{tab:visgenbench}), we conduct a thorough \textit{human evaluation} to assess the perceptual quality of multi-step visual creation tasks, including images and videos (Tab.~\ref{tab:human_eval}), which shows that:
(1) \noindent\textbf{Overall Findings:} VisionCreator-32B achieves the highest Overall Score of 3.42, surpassing both GPT-5 (3.19) and Gemini2.5-Pro (3.01). This indicates that our UTPC framework not only ensures task success in an automated setting but also delivers outputs that are qualitatively preferred by human evaluators.
(2) \noindent\textbf{Image vs. Video Performance:} VisionCreator-32B excels across both modalities, with 99\% image success and 96\% video success, accompanied by strong human evaluation scores (3.53 for images, 3.49 for videos). This balanced performance highlights the model's capability to maintain coherent multi-step planning and execution for both static and dynamic content.
(3) \noindent\textbf{Implications:} The human evaluation corroborates trends observed in VLM-based metrics, validating that the model's planning-driven reward design and VRL training not only improve automated success metrics but also enhance perceptual quality, consistency, and user satisfaction in real-world multi-step visual creation tasks.

\subsection{Ablation Studies}
\label{subsec:ablation_studies}
We conduct ablation studies on sampled VisGenBench-104, where key findings from Tab.~\ref{tab:ablation_study} include:
(1) \textbf{Effectiveness of PST}.
Our PST with \textit{v7 (PST + 3×VisGenData-4k + General-20\%)} achieves significant improvement over SFT with \textit{v2 (3×VisGenData-4k + General-1M)} (0.440 vs. 0.357). Performance improves with increasing general data ratio (\textit{v4}→\textit{v5}→\textit{v6}→\textit{v7}), confirming balanced specialization prevents overfitting while maintaining generalization.
(2) \textbf{Data Configuration Strategies}.
Simply increasing specialized data scale does not guarantee improvement. \textit{v3 (20×VisGenData-4k + General-1M)} underperforms \textit{v2 (3×VisGenData-4k + General-1M)} (0.322 vs. 0.357), indicating excessive data repetition causes overfitting. Our PST strategy achieves better performance through optimized data ratios.
(3) \textbf{Virtual Reinforcement Learning}.
All VRL models substantially outperform SFT variants. \textit{RL4 (PST + Plan×Fine reward)} improves Overall Score by 49\% over the best PST model \textit{v7} (0.654 vs. 0.440), demonstrating VRL's effectiveness.
(4) \textbf{Reward Function Designs}. Building upon RL1, \textit{RL2 (PST + Plan×(Result+Format) reward)} which incorporates additional plan reward, demonstrates improved performance with a higher Success Rate (0.942 vs. 0.904) and Overall Score (0.644 vs. 0.634). \textit{RL4} achieves the best Overall Score (0.654) and demonstrates strong comprehensive performance across multiple dimensions, proving fine-grained rewards enhance model capability.
(5) \textbf{Importance of Pre-training Foundation}
\textit{RL2 (PST + Plan×(Result+Format) reward)} outperforms \textit{RL3 (Qwen3-VL + Plan×(Result+Format) reward)} (0.644 vs. 0.631) despite identical rewards, with \textit{RL2} achieving a notably higher Success Rate of 0.942 compared to 0.885 for \textit{RL3}, validating PST provides a stronger foundation for RL training.

\begin{figure*}[ht]
\vspace{-1mm}
\centering
\includegraphics[width=0.95\textwidth]{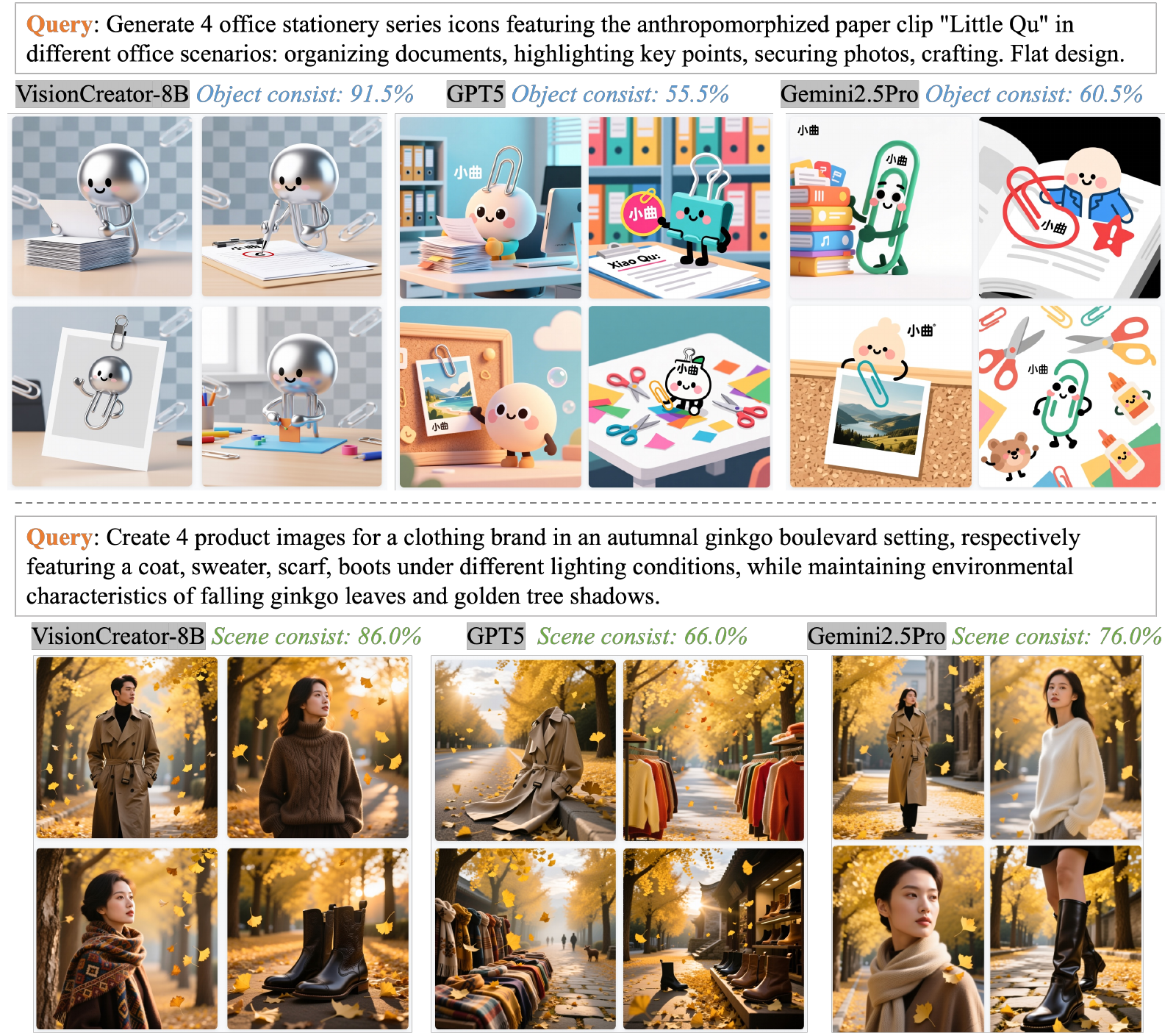}
\vspace{-2mm}
\caption{Visualization comparisons of consistency.}
\vspace{-3mm}
\label{fig:vis_compare}
\end{figure*}

\section{Conclusion}
\label{sec:conclusion}
We present VisionCreator, a native visual-generation agent that unifies Understanding, Thinking, Planning, and Creation (UTPC) in an end-to-end framework. Our contributions include: (1) VisGenData-4k with UTPC structures via metacognition-based VisionAgent; (2) Progressive Specialization Training and Virtual Reinforcement Learning for stable capability acquisition; (3) VisGenBench for multi-step visual creation evaluation. Experiments show VisionCreator outperforms larger closed-source models, validating our approach. This work establishes a foundation for visual-generation agentic systems and autonomous creative content generation.




\clearpage
\setcounter{section}{0}
\renewcommand{\thesection}{\Alph{section}}
\renewcommand{\thesubsection}{\thesection.\arabic{subsection}}

\section*{Detailed Theoretical Derivations of VRL Theorems}
\label{app:theoretical_derivations}

This appendix provides detailed mathematical derivations and proofs for the two VRL theorems presented in the main text. The derivation process is divided into three stages: formal modeling and definitions, derivation of the error upper bound (Theorems~\ref{thm:error_bound}), and analysis of performance improvement (Theorems~\ref{thm:improvement}).

\subsection*{Stage 1: Formal Modeling and Definitions}

We first formalize the agent's policy, environment, and rewards to establish the foundation for subsequent derivations.

\subsubsection*{1.1 Formalization of Environment and Policy}

\textbf{Definition A.1 (MDP Tuple)}: The real-world task is modeled as a Markov Decision Process (MDP), denoted as $\mathcal{M}_{\text{real}} = (\mathcal{S}, \mathcal{A}, P_{\text{real}}, R_{\text{vision}}, \rho_0, \gamma)$.
\begin{itemize}
    \item $\mathcal{S}$: State space, containing multimodal observations $o_t$ (textual instructions, tool feedback, virtual visual states).
    \item $\mathcal{A}$: Action space, containing reasoning tokens, planning steps, and tool invocations.
    \item $P_{\text{real}}(s'|s,a)$: Dynamic transition probability of the real environment.
    \item $R_{\text{vision}}(s,a,s')$: Real reward function, measuring the perceptual quality of generated content (e.g., aesthetics, alignment).
    \item $\rho_0$: Initial state distribution.
    \item $\gamma \in (0,1)$: Discount factor.
\end{itemize}

\textbf{Definition A.2 (Virtual Environment)}: The virtual environment is $\mathcal{M}_{\text{vrt}} = (\mathcal{S}, \mathcal{A}, P_{\text{vrt}}, R_{\text{vrt}}, \rho_0, \gamma)$. Its core differences are:
\begin{itemize}
    \item $P_{\text{vrt}}(s'|s,a)$: Tool dynamics simulated by VisGenEnv, with fidelity quantified by the \textbf{tool capability} $C_{\text{tool}} \in [0,1]$.
    \item $R_{\text{vrt}}$: Virtual reward function, composed of $R_{\text{plan}}$ and $R_{\text{fine}}$ according to the plan-driven reward design. It is a structural proxy reward that substitutes for the computationally infeasible $R_{\text{vision}}$ in the virtual environment.
\end{itemize}

\textbf{Definition A.3 (Policy and Return)}: Let $\pi$ be a policy (mapping from states to actions). The expected discounted return of policy $\pi$ in environment $\mathcal{M}$ is defined as:
\[
J(\pi; \mathcal{M}) = \mathbb{E}_{\tau \sim (\pi, \mathcal{M})}\left[ \sum_{t=0}^{\infty} \gamma^t R(s_t, a_t, s_{t+1}) \right].
\]
Here, the trajectory $\tau = (s_0, a_0, s_1, a_1, \dots)$ is generated by $s_0 \sim \rho_0$, $a_t \sim \pi(\cdot|s_t)$, and $s_{t+1} \sim P(\cdot|s_t, a_t)$. For brevity, we denote $J_{\text{real}}(\pi) = J(\pi; \mathcal{M}_{\text{real}})$ and $J_{\text{vrt}}(\pi) = J(\pi; \mathcal{M}_{\text{vrt}})$.

\subsubsection*{1.2 Key Variables and Core Assumptions}

\textbf{Definition A.4 (Key Variables)}:
\begin{itemize}
    \item \textbf{Tool capability $C_{\text{tool}}$}: Measures how well the virtual environment dynamics $P_{\text{vrt}}$ approximate the real dynamics $P_{\text{real}}$. $C_{\text{tool}}=1$ indicates perfect simulation.
    \item \textbf{PST prior $\pi_{\text{pst}}$}: The initialization policy obtained through Progressive Specialization Training (PST). Its behavioral distribution on real expert data is denoted as $d_{\text{pst}}(s,a)$.
    \item \textbf{Plan sufficiency $\Phi_{\text{plan}} \in [0,1]$}: Measures the strength of the causal link between a "logically correct" plan and the final "high-quality visual output".
    \item \textbf{Result reward $R_{\text{result}} \in [0,1]$}: A subcomponent of $R_{\text{fine}}$ that evaluates whether the task is structurally completed (e.g., number of images, video duration).
\end{itemize}

\textbf{Assumption A.1 (Dynamic Difference Upper Bound)}: There exists a constant $\delta > 0$ related to environment complexity such that for all state-action pairs $(s,a)$,
\[
| P_{\text{real}}(\cdot|s,a) - P_{\text{vrt}}(\cdot|s,a) |_1 \leq \delta (1 - C_{\text{tool}}).
\]
This assumption stems from the high-fidelity simulation design of VisGenEnv: higher tool capability $C_{\text{tool}}$ leads to smaller differences between virtual and real transitions.

\textbf{Assumption A.2 (Reward Proxy Error)}: The relationship between the real visual reward $R_{\text{vision}}$ and the proxy reward is modulated by plan sufficiency $\Phi_{\text{plan}}$ and result reward $R_{\text{result}}$. There exists a constant $\beta > 0$ such that for meaningful trajectories (i.e., when planning logic is correct), the reward difference satisfies:
\[
| R_{\text{vision}}(s,a,s') - \Phi_{\text{plan}} \cdot R_{\text{result}}(s,a,s') | \leq \beta (1 - \Phi_{\text{plan}} \cdot R_{\text{result}}).
\]
This assumption reflects the design philosophy of LtrReward: when planning is sufficient ($\Phi_{\text{plan}} \approx 1$) and the task is perfectly completed structurally ($R_{\text{result}} \approx 1$), the real visual quality also tends to be high.

\textbf{Assumption A.3 (KL Constraint on Policy Deviation)}: The policy $\pi_{\text{vrt}}$ trained in the virtual environment differs from the PST prior $\pi_{\text{pst}}$ in the state-action distribution. This difference can be measured by the KL divergence $D_{\text{KL}}(\pi_{\text{vrt}} | \pi_{\text{pst}})$, and its impact on the return difference is linearly bounded. That is, there exists a constant $\alpha > 0$ such that the related performance difference is constrained by it.

\textbf{Definition A.5 (Sim-to-Real Error)}: For a given policy $\pi$, its sim-to-real error is defined as:
\[
\mathcal{E}(\pi) = | J_{\text{real}}(\pi) - J_{\text{vrt}}(\pi) |.
\]

\subsection*{Stage 2: Derivation of Theorems~\ref{thm:error_bound} (Virtual-to-Real Error Upper Bound)}

\textbf{Theorems~\ref{thm:error_bound} (Virtual-to-Real Error Upper Bound)} Restated:
Under Assumptions A.1, A.2, A.3, for any policy $\pi$ (trained in the virtual environment, denoted as $\pi_{\text{vrt}}$), its sim-to-real error $\mathcal{E}(\pi)$ satisfies:
\[
\mathcal{E}(\pi) \le \underbrace{\delta(1 - C_{\text{tool}})}_{\text{Dynamics Gap}} + \underbrace{\alpha \cdot D_{\text{KL}}(\pi_{\text{vrt}} | \pi_{\text{pst}})}_{\text{Action Bias Bound}} + \underbrace{\beta(1 - \Phi_{\text{plan}} \cdot R_{\text{result}})}_{\text{Goal Alignment Error}}.
\]

\textbf{Proof}:

We decompose the total error into three separately bounded components via the triangle inequality and constrain each using the above assumptions.

\noindent \textbf{Step 2.1: Decompose Total Error}
Consider an intermediate environment $\mathcal{M}_{\text{hybrid}} = (\mathcal{S}, \mathcal{A}, P_{\text{vrt}}, $ $R_{\text{vision}}, \rho_0, \gamma)$, which uses the virtual environment dynamics $P_{\text{vrt}}$ but retains the real reward $R_{\text{vision}}$. Denote $J_{\text{hybrid}}(\pi) = J(\pi; \mathcal{M}_{\text{hybrid}})$. Then:
\[
\begin{aligned}
\mathcal{E}(\pi) &= |J_{\text{real}}(\pi) - J_{\text{vrt}}(\pi)| \\
&\leq |J_{\text{real}}(\pi) - J_{\text{hybrid}}(\pi)| \quad &\text{(Term I: Dynamics Gap)} \\
&+ |J_{\text{hybrid}}(\pi) - J_{\text{vrt}}^{\text{ideal}}(\pi)| &\text{(Term II: Reward Gap)} \\
&+ |J_{\text{vrt}}^{\text{ideal}}(\pi) - J_{\text{vrt}}(\pi)|. &\text{(Term III: Policy Bias)}
\end{aligned}
\]
Here $J_{\text{vrt}}^{\text{ideal}}(\pi)$ represents the ideal return under dynamics $P_{\text{vrt}}$ and reward $R_{\text{vrt}}$ with the policy perfectly constrained by the PST prior (no deviation). We next upper bound each term.

\noindent \textbf{Step 2.2: Bounding Term I (Dynamics Gap)}
Term I measures the return difference due to the difference between dynamic models $P_{\text{real}}$ and $P_{\text{vrt}}$. According to \textbf{Assumption A.1} and the \textbf{Performance Difference Lemma}, for any policy $\pi$,
\[
|J_{\text{real}}(\pi) - J_{\text{hybrid}}(\pi)| \leq \frac{\gamma \cdot \delta (1 - C_{\text{tool}})}{(1-\gamma)^2} \cdot \max_{s,a} |R_{\text{vision}}(s,a)|.
\]
Let $R_{\text{max}} = \max_{s,a} |R_{\text{vision}}(s,a)|$ and define $\delta' = \frac{\gamma R_{\text{max}}}{(1-\gamma)^2} \delta$, we obtain:
\[
\text{Term I} \leq \delta' (1 - C_{\text{tool}}).
\]
In the theorem statement, constant factors are absorbed into $\delta$, so we have Term I $\leq \delta (1 - C_{\text{tool}})$.

\noindent \textbf{Step 2.3: Bounding Term II (Reward Gap)}
Term II measures the difference between using the real reward $R_{\text{vision}}$ and using the proxy reward $\Phi_{\text{plan}} \cdot R_{\text{result}}$ (as the core part of $R_{\text{vrt}}$) under the same dynamics. According to \textbf{Assumption A.2}, for each step in the trajectory, the reward difference is bounded. Applying the Performance Difference Lemma (reward difference part) again yields:
\[
|J_{\text{hybrid}}(\pi) - J_{\text{vrt}}^{\text{ideal}}(\pi)| \leq \frac{\beta (1 - \Phi_{\text{plan}} \cdot R_{\text{result}})}{1-\gamma}.
\]
Define $\beta' = \beta / (1-\gamma)$, then Term II $\leq \beta' (1 - \Phi_{\text{plan}} \cdot R_{\text{result}})$. In the theorem statement, $\beta'$ is written as $\beta$.

\noindent \textbf{Step 2.4: Bounding Term III (Policy Bias)}
Term III measures the return loss due to the deviation of the virtually trained policy $\pi_{\text{vrt}}$ from the ideal PST prior $\pi_{\text{pst}}$. According to \textbf{Assumption A.3}, there exists a constant $\alpha > 0$ such that:
\[
|J_{\text{vrt}}^{\text{ideal}}(\pi) - J_{\text{vrt}}(\pi)| \leq \alpha \cdot D_{\text{KL}}(\pi_{\text{vrt}} | \pi_{\text{pst}}).
\]
This assumption stems from the "anchoring" effect of the PST prior on the policy exploration space, preventing catastrophic policy drift in the absence of real visual feedback.

\noindent \textbf{Step 2.5: Combining Error Upper Bounds}
Summing the upper bounds of Term I, II, and III, we obtain:
\[
\mathcal{E}(\pi) \leq \delta' (1 - C_{\text{tool}}) + \beta' (1 - \Phi_{\text{plan}} \cdot R_{\text{result}}) + \alpha \cdot D_{\text{KL}}(\pi_{\text{vrt}} | \pi_{\text{pst}}).
\]
Relabeling constants $\delta' \to \delta$, $\beta' \to \beta$ yields the form in Theorems~\ref{thm:error_bound}. $\blacksquare$

\subsection*{Stage 3: Derivation of Theorems~\ref{thm:improvement} (Real-World Performance Improvement Lower Bound)}

\textbf{Theorems~\ref{thm:improvement} (Real-World Improvement of VRL)} Restated:
Let $\pi_{\text{pst}}$ be the initial policy after PST training, and $\pi_{\text{VRL}}$ be the policy optimized through Virtual Reinforcement Learning (VRL). Define the virtual optimization gain as $\Delta_{\text{vrt}} = J_{\text{vrt}}(\pi_{\text{VRL}}) - J_{\text{vrt}}(\pi_{\text{pst}}) = \mathbb{E}_{\pi}[\Delta(R_{\text{plan}}+R_{\text{fine}})]$. Then, under the error bound of Theorems~\ref{thm:error_bound}, the real-world performance improvement satisfies:
\[
J_{\text{real}}(\pi_{\text{VRL}}) - J_{\text{real}}(\pi_{\text{pst}}) \ge \underbrace{\Gamma \cdot \Delta_{\text{vrt}}}_{\text{Causal Improvement}} - \underbrace{\mathcal{E}(\pi_{\text{VRL}})}_{\text{Transfer Loss}},
\]
where $\Gamma = C_{\text{tool}} \cdot \Phi_{\text{plan}} \cdot \kappa(\pi_{\text{pst}})$ is the effectiveness coefficient, and $\kappa(\pi_{\text{pst}}) \in (0,1]$ denotes the \textbf{Anchoring Strength} of the PST prior in constraining policy exploration.

\textbf{Proof}:

\noindent \textbf{Step 3.1: Establish Inequality Based on Error Bound}
From Theorems~\ref{thm:error_bound}, for any policy $\pi$, we have $J_{\text{real}}(\pi) \ge J_{\text{vrt}}(\pi) - \mathcal{E}(\pi)$. Applying this inequality to $\pi_{\text{VRL}}$ and $\pi_{\text{pst}}$ respectively:
\[
\begin{aligned}
J_{\text{real}}(\pi_{\text{VRL}}) &\ge J_{\text{vrt}}(\pi_{\text{VRL}}) - \mathcal{E}(\pi_{\text{VRL}}), \\
J_{\text{real}}(\pi_{\text{pst}}) &\ge J_{\text{vrt}}(\pi_{\text{pst}}) - \mathcal{E}(\pi_{\text{pst}}).
\end{aligned}
\]
Subtracting the second inequality from the first yields:
\[
J_{\text{real}}(\pi_{\text{VRL}}) - J_{\text{real}}(\pi_{\text{pst}}) \ge \left[ J_{\text{vrt}}(\pi_{\text{VRL}}) - J_{\text{vrt}}(\pi_{\text{pst}}) \right] - \left[ \mathcal{E}(\pi_{\text{VRL}}) - \mathcal{E}(\pi_{\text{pst}}) \right].
\]
Since $\pi_{\text{pst}}$ itself is trained on real expert data, its sim-to-real error $\mathcal{E}(\pi_{\text{pst}})$ is expected to be small (aligned during PST). Therefore, the lower bound of performance improvement is mainly affected by the error $\mathcal{E}(\pi_{\text{VRL}})$ of $\pi_{\text{VRL}}$. Conservatively setting the transfer loss term as $\mathcal{E}(\pi_{\text{VRL}})$ gives:
\[
J_{\text{real}}(\pi_{\text{VRL}}) - J_{\text{real}}(\pi_{\text{pst}}) \ge \Delta_{\text{vrt}} - \mathcal{E}(\pi_{\text{VRL}}). \quad \text{(1)}
\]

\noindent \textbf{Step 3.2: Relating Virtual Gain to Real Gain (Causal Improvement)}
The $\Delta_{\text{vrt}}$ in inequality (1) is the gain in virtual reward. We need to relate it to real performance improvement. This relies on a core idea: optimizing "planning and execution logic" in the virtual environment, as long as the simulation is sufficiently credible, causally leads to improved real-world visual quality.
Define the \textbf{effectiveness coefficient} $\Gamma$, which quantifies the expected increment in real reward per unit increment in virtual reward. We model it as the product of three key factors:
\begin{itemize}
    \item $C_{\text{tool}}$: Tool capability determines the probability of logical execution being reproduced in reality.
    \item $\Phi_{\text{plan}}$: Plan sufficiency determines the strength of association between correct logic and high-quality output.
    \item $\kappa(\pi_{\text{pst}})$: Anchoring strength of the PST prior, indicating the degree to which the policy remains in a "reasonable" distribution region during VRL optimization, with $\kappa \in (0,1]$. Strong anchoring ($\kappa \approx 1$) ensures the optimization direction remains effective in the real world.
\end{itemize}
Therefore, we assume a monotonic relationship:
\[
J_{\text{real}}(\pi_{\text{VRL}}) - J_{\text{real}}(\pi_{\text{pst}}) \ge \Gamma \cdot \Delta_{\text{vrt}} - \mathcal{E}(\pi_{\text{VRL}}), \quad \text{where } \Gamma = C_{\text{tool}} \cdot \Phi_{\text{plan}} \cdot \kappa(\pi_{\text{pst}}). \quad \text{(2)}
\]
When $\Gamma > 0$, the logical improvement brought by virtual optimization can be partially translated into real-world improvement.

\noindent \textbf{Step 3.3: Derive the Final Lower Bound}
Substituting $\Delta_{\text{vrt}} = \mathbb{E}_{\pi}[\Delta R_{\text{vrt}}]$ into inequality (2) yields the lower bound stated in Theorems~\ref{thm:improvement}:
\[
J_{\text{real}}(\pi_{\text{VRL}}) - J_{\text{real}}(\pi_{\text{pst}}) \ge \Gamma \cdot \mathbb{E}_{\pi} [\Delta R_{\text{vrt}}] - \mathcal{E}(\pi_{\text{VRL}}).
\]

\noindent \textbf{Step 3.4: Condition for Non-Negative Improvement}
From the inequality in Theorems~\ref{thm:improvement}, the \textbf{sufficient condition} for non-negative improvement in real-world performance (i.e., $J_{\text{real}}(\pi_{\text{VRL}}) \ge J_{\text{real}}(\pi_{\text{pst}})$) is directly obtained as:
\[
\Gamma \cdot \mathbb{E}_{\pi} [\Delta R_{\text{vrt}}] \ge \mathcal{E}(\pi_{\text{VRL}}).
\]
This means that the \textbf{Causal Improvement} brought by virtual training must be sufficient to cover the \textbf{Transfer Loss} arising from simulation imperfections. This does not require $C_{\text{tool}} = 1$ or $\Phi_{\text{plan}} = 1$; as long as their product combined with the anchoring strength is large enough to make $\Gamma$ sufficiently large, and VRL can effectively increase $\Delta R_{\text{vrt}}$ (as shown in experiments where virtual reward exceeds 95\%), positive transfer is guaranteed. $\blacksquare$


\subsection*{Summary}
Through formal modeling, this derivation decomposes the challenge of sim-to-real transfer into differences at the dynamic, reward, and policy levels, and quantifies their upper bounds using key variables such as tool capability, plan sufficiency, and PST prior. Theorems~\ref{thm:error_bound} shows that systematic error can be controlled by improving tool fidelity, strengthening PST anchoring, and optimizing plan-result alignment. Theorems~\ref{thm:improvement} further proves that as long as virtual training can effectively enhance the agent's logical capabilities (Causal Improvement) and this improvement outweighs the bounded systematic error (Transfer Loss), performance improvement in the real world is guaranteed. This provides a solid theoretical foundation for the application of virtual reinforcement learning in high-dimensional, long-horizon tasks such as visual creation.

\clearpage

\begin{table}[htbp]
\centering
\renewcommand{\tabcolsep}{0.8pt}
\renewcommand{\arraystretch}{1.1}
\small
\caption{Human Evaluation of Detailed Dimensions on VisGenBench-Image (Score = Success Rate $\times$ Human Evaluation Score)}
\begin{tabular}{lccccccccc}
\toprule
\multirow{2}{*}{\textbf{Model}} & \textbf{Semantic} & \textbf{Style} & \textbf{Emotion} & \textbf{Subject} & \textbf{Design} & \textbf{Visual} & \textbf{Text} & \multirow{2}{*}{\textbf{Creativity}} & \multirow{2}{*}{\textbf{Overall}} \\
& \textbf{Matching} & \textbf{Matching} & \textbf{Matching} & \textbf{Consistency} & \textbf{Integrity} & \textbf{Integrity} & \textbf{Quality} & & \\
\midrule
GPT-5 & 3.4883 & 3.6214 & 2.9656 & 3.6024 & 3.4408 & 3.4218 & 2.7565 & 3.4408 & 3.3458 \\
Gemini2.5-Pro & 3.3943 & 3.5399 & 2.8119 & 3.4034 & 3.2214 & 3.2669 & 2.7300 & 3.3215 & \textbf{3.2123} \\
Qwen3-VL-32B-Tk & 3.4435 & 3.7248 & 2.9876 & 3.5890 & 3.4047 & 3.4823 & 2.8130 & 3.4726 & 3.3659 \\
Qwen3-VL-32B-SFT & 3.3504 & 3.8016 & 2.8896 & 3.7632 & 3.4368 & 3.5040 & 2.8224 & 3.5232 & \textbf{3.3888} \\
VisionCreator-32B & 3.6432 & 3.8412 & 3.1581 & 3.7620 & 3.4452 & 3.6531 & 2.8809 & 3.5739 & \colorbox{lightpink!90}{\textbf{3.4947}} \\
\bottomrule
\end{tabular}
\label{tab:visgenbench_human_eval}
\end{table}

\begin{table}[htbp]
\centering
\renewcommand{\tabcolsep}{3.8pt}
\renewcommand{\arraystretch}{1.1}
\small
\caption{Human Evaluation of Detailed Dimensions on VisGenBench-Video (Score = Success Rate $\times$ Human Evaluation Score)}
\begin{adjustbox}{max width=\textwidth}
\begin{tabular}{lcccccc}
\toprule
\multirow{2}{*}{\textbf{Model}} & \multirow{2}{*}{\textbf{Script}} & \textbf{Story-} & \textbf{Content} & \textbf{Subject} & \textbf{Video} & \textbf{Visual} \\
& & \textbf{board} & \textbf{Consistency} & \textbf{Consistency} & \textbf{Effect} & \textbf{Motion} \\
\midrule
GPT-5 & 3.1062 & 2.9202 & 3.1434 & 3.1713 & 3.0597 & 3.0039 \\
Gemini2.5-Pro & 2.9484 & 2.6796 & 3.0156 & 2.856 & 2.8728 & 2.6628 \\
Qwen3-VL-32B-Thinking & 3.069 & 2.9016 & 3.1713 & 3.162 & 2.9574 & 2.9388 \\
Qwen3-VL-32B-SFT & 3.6002 & 2.867 & 3.5814 & 3.3652 & 3.243 & 2.9328 \\
VisionCreator-32B & 3.5616 & 3.1872 & 3.5808 & 3.4176 & 3.4752 & 3.2256 \\
\bottomrule
\end{tabular}
\end{adjustbox}
\begin{adjustbox}{max width=\textwidth}
\begin{tabular}{lccccccc}
\toprule
\textbf{Model} & \textbf{Audio-Visual} & \textbf{Music} & \textbf{Dubbing} & \textbf{Subtitle} & \textbf{Transition} & \textbf{Editing} & \textbf{Overall} \\
\midrule
GPT-5 & 3.0411 & 3.2643 & 2.8644 & 2.9788 & 2.8812 & 2.8392 & 2.814 \\
Gemini2.5-Pro & 2.8056 & 2.7888 & 2.8728 & 2.8812 & 2.8392 & 2.562 & 2.814 \\
Qwen3-VL-32B-Thinking & 2.9481 & 2.9202 & 3.1341 & 3.069 & 2.9295 & 2.8737 & 3.0039 \\
Qwen3-VL-32B-SFT & 3.1772 & 3.0644 & 3.2148 & 3.0174 & 3.0268 & 2.9328 & 3.1678 \\
VisionCreator-32B & 3.3792 & 3.2928 & 3.3888 & 3.3216 & 3.2352 & 3.1104 & \colorbox{lightpink!90}{3.3504} \\
\bottomrule
\end{tabular}
\end{adjustbox}
\label{tab:visgenbench_video_human_eval_split}
\end{table}

\begin{table}[h]
\centering
\renewcommand{\tabcolsep}{5.5pt}
\renewcommand{\arraystretch}{1.1}
\caption{General-purpose Datasets.}
\label{tab:ai_training_data}
\begin{tabular}{lll}
\toprule
\textbf{Category} & \textbf{Name} & \textbf{Quantity} \\
\midrule
\multirow{3}{*}{\centering NLP} 
& \multirow{1}{*}{DeepSeek-R1-Distill-110k \cite{deepseek_r1_distill_110k_sft}} & \multirow{1}{*}{110k} \\
& LONGCOT-Refine-500K \cite{longcot_refine_500k} & 500k \\
& \multirow{1}{*}{alpaca-gpt4-data \cite{alpaca_gpt4_en,alpaca_gpt4_zh}} & \multirow{1}{*}{100k} \\
\midrule
\multirow{1}{*}{\centering Multimodal} 
& \multirow{1}{*}{M3IT \cite{m3it_dataset}} & \multirow{1}{*}{1592k} \\
\midrule
\multirow{6}{*}{\centering Tool Calling} 
& function-calling-chatml \cite{function_calling_chatml_dataset} & 112k \\
& xlam-function-calling-60k \cite{xlam_function_calling_60k} & 60k \\
& ms-agent \cite{msagent} & 600k \\
& ToolACE \cite{toolace_dataset} & 11k \\
& ToolBench \cite{toolbench_dataset} & 123k \\
& AFM \cite{agent_foundation_models_repo} & 76k \\
\bottomrule
\end{tabular}
\end{table}

\begin{table}[htbp]
\centering
\caption{Task Distribution in VisGenData-4k Dataset}
\label{tab:all_tasks}
\begin{tabular}{p{0.8cm}p{6cm}p{0.8cm}p{6cm}}
\toprule
\multicolumn{2}{c}{\textbf{Video Generation Tasks}} & \multicolumn{2}{c}{\textbf{Image Generation Tasks}} \\
\cmidrule(r){1-2} \cmidrule(l){3-4}
No. & Task Type & No. & Task Type \\
\midrule
1. & Product marketing videos & 1. & Product images \\
2. & Public service advertisements & 2. & Detail pages \\
3. & Corporate promotion videos & 3. & Key Visual (KV) \\
4. & Brand story videos & 4. & Landing pages / H5 graphics \\
5. & Event promotion videos & 5. & Complete brand visual identity \\
6. & Instructional videos & 6. & Banner graphics \\
7. & Popular science documentaries & 7. & Official account cover images \\
8. & Music videos (MV) & 8. & Xiaohongshu covers \\
9. & Concert recordings & 9. & Marketing posters \\
10. & Variety shows & 10. & Avatar design \\
11. & Story videos & 11. & Static emoji generation \\
12. & Video podcasts & 12. & ICON design \\
13. & Picture books & 13. & LOGO design \\
14. & Dynamic comics & 14. & Mini-game UI design \\
15. & Animated short films & 15. & Character design \\
16. & Animated movies & 16. & Character action design \\
17. & Game adaptation films & 17. & Scene design \\
18. & Game videos & 18. & Storyboards \\
19. & Movies & 19. & Picture Book \\
20. & Short dramas & 20. & Stylization \\
21. & Story explanations & 21. & Realistic Photography \\
\bottomrule
\end{tabular}
\end{table}

\begin{table}[htbp]
\centering
\renewcommand{\tabcolsep}{1.1pt}
\renewcommand{\arraystretch}{1.1}
\caption{VisGenEnv integrates 36 visual creation tools.}
\label{tab:tools}
\begin{tabular}{lll}
\hline
\textbf{Tool Category} & \textbf{Tool Function} & \textbf{Tool Name} \\
\hline
\multirow{6}{*}{Text-to-Text}
& Storyboard Text Polishing (Claude) & tool\_prompt\_refine \\
& Storyboard Generation (Claude) & tool\_video\_shot\_gen \\
& Script Tool (Claude) & tool\_video\_script\_gen \\
& Storyboard Polishing (Claude) & tool\_storyboard\_polish \\
& Script \& Storyboard Polishing & tool\_script\_storyboard\_merge \\
& Text-to-Video (Veo3) & tool\_text2video\_veo \\
\hline
\multirow{5}{*}{Text-to-Image} 
& Text-to-Image (nano-banana) & tool\_text2image\_gemini \\
& Text-to-Image (hunyuan) & tool\_text2image\_hunyuan \\
& Text-to-Image (ByteDance) & tool\_text2image\_seed \\
& Text-to-Image (GPT) & tool\_text2image\_gpt \\
& Text-to-Image (Qwen) & tool\_text2image\_qwen \\
\hline
\multirow{3}{*}{Image-to-Image}
& Image-to-Image (nano-banana) & tool\_image\_edit\_gemini \\
& Image-to-Image (Qwen) & tool\_image\_edit\_qwen \\
& Image-to-Image (GPT) & tool\_image\_edit\_gpt \\
\hline
\multirow{2}{*}{Image-to-Video}
& Image-to-Video (Keling) & tool\_image2video\_keling \\
& Image-to-Video (Veo3) & tool\_image2video\_veo3 \\
\hline
\multirow{12}{*}{Audio Generation}
& Music Generation (Suno) & tool\_music\_suno \\
& Video Sound Effect Generation & tool\_sound\_fx\_gen \\
& TTS Generation & tool\_tts\_generation \\
& Video Composition (MoviePy) & tool\_video\_composite \\
& Video Clip - MoviePy Post-processing & tool\_video\_postprocess \\
& Video Generation Automation Pipeline & tool\_video\_auto\_pipeline \\
& Beat Detection Tool & tool\_beat\_detect \\
& Video Editing (Trim) & tool\_video\_trim\_edit \\
& Video Speed Change & tool\_video\_speed\_adjust \\
& TTS + Composition Tool & tool\_tts\_composite \\
& Audio Editing & tool\_audio\_edit\_cut \\
& Add Subtitles & tool\_subtitle\_add\_text \\
\hline
\multirow{1}{*}{Multimodal}
& Video Understanding (Gemini2.5-Pro) & tool\_video\_analysis \\
\multirow{1}{*}{Understanding} & Audio Understanding (Gemini2.5-Pro) & tool\_audio\_analysis \\
& Image Understanding (Gemini2.5-Pro) & tool\_image\_analysis \\
\hline
\multirow{5}{*}{Other}
& Tavily Search - Content Extraction & tool\_search\_content \\
& Inspiration Search & tool\_search\_inspire \\
& Summary Tool & tool\_content\_summary \\
& To-Do List & tool\_task\_manager \\
& HTML Generation Tool & tool\_html\_builder \\
\hline
\end{tabular}
\end{table}





\begin{figure}[ht]
\vspace{-3mm}
\centering
\includegraphics[width=1.0\textwidth]{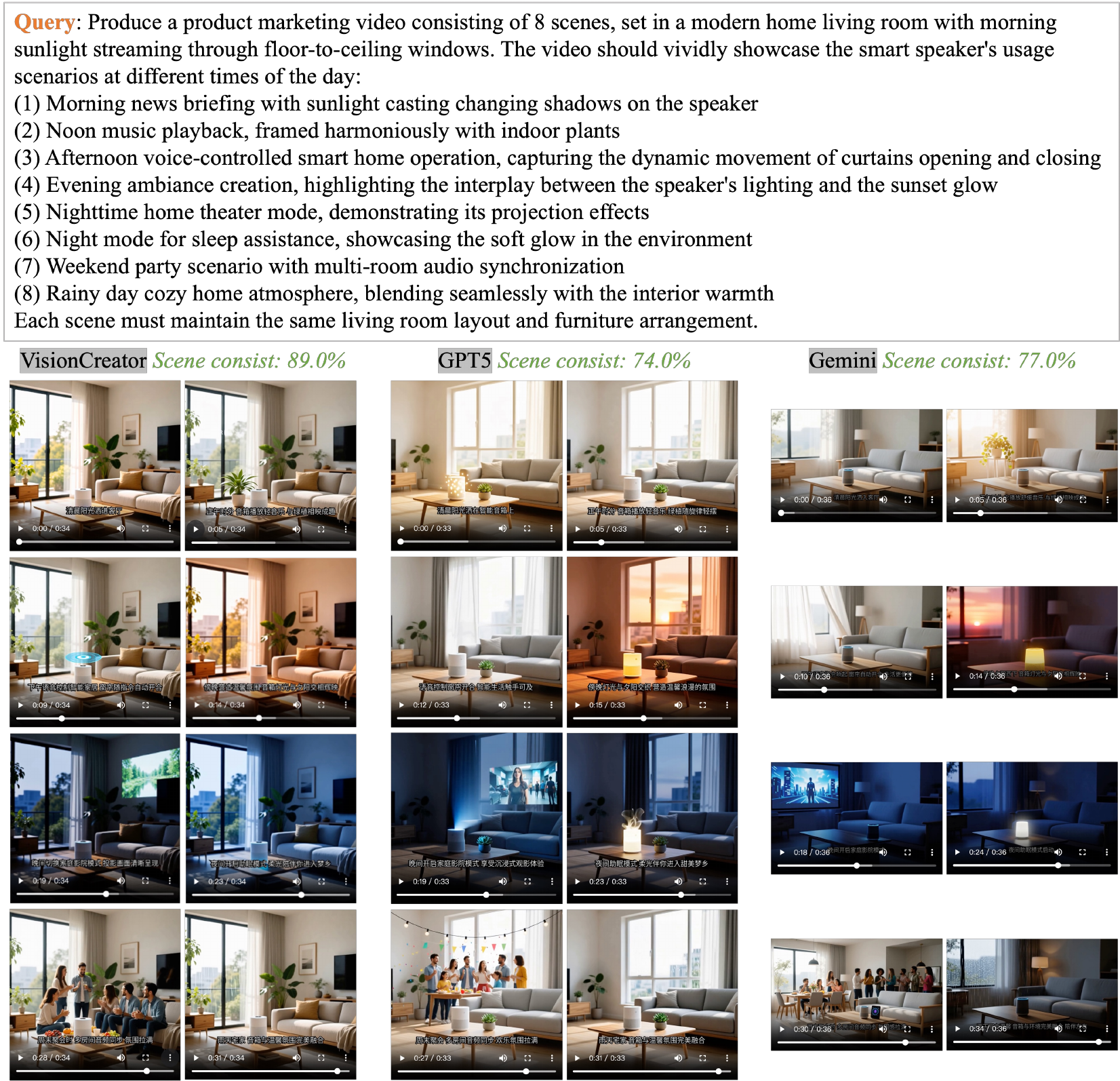}
\vspace{-5mm}
\caption{Visualizations for scene consistency.}
\vspace{-3mm}
\label{fig:vis_4}
\end{figure}


\begin{figure}[ht]
\vspace{-3mm}
\centering
\includegraphics[width=1.0\textwidth]{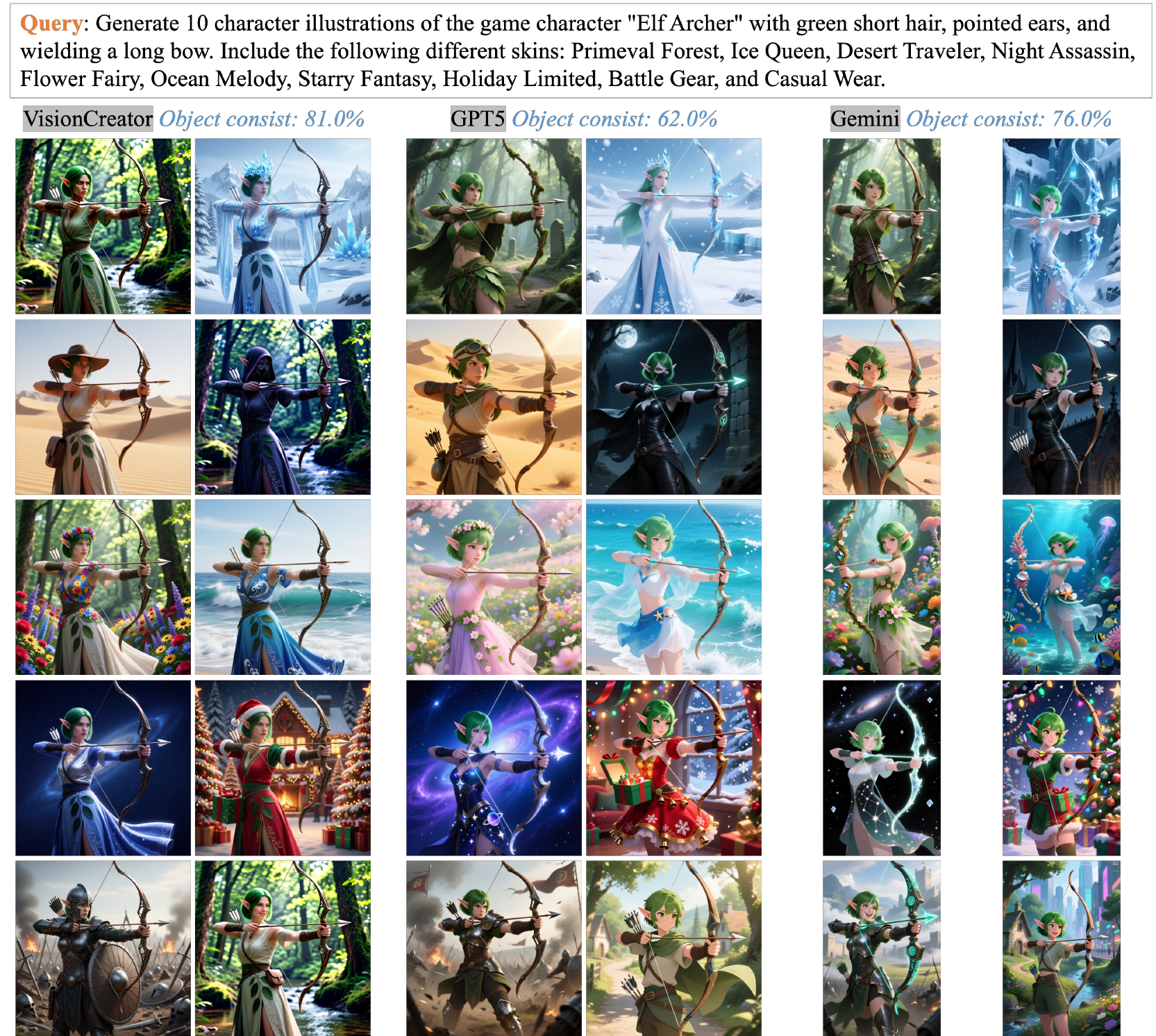}
\vspace{-5mm}
\caption{Visualizations for object consistency.}
\vspace{-3mm}
\label{fig:vis_6}
\end{figure}


\begin{figure}[ht]
\vspace{-3mm}
\centering
\includegraphics[width=1.0\textwidth]{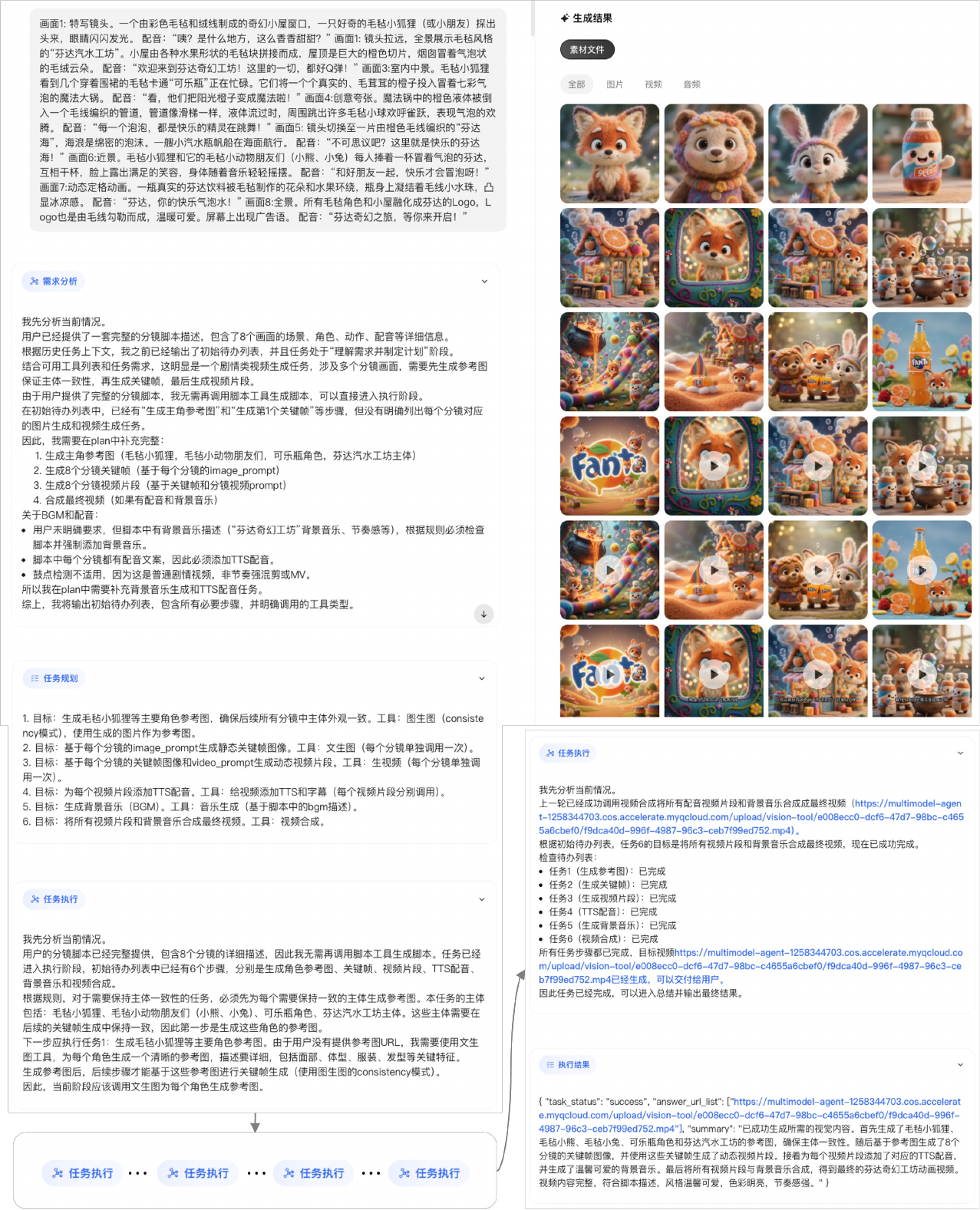}
\vspace{-5mm}
\caption{Demo page.}
\vspace{-3mm}
\label{fig:demo1}
\end{figure}

\clearpage

\clearpage

\bibliography{colm2024_conference}

\begin{thebibliography}{40}
\providecommand{\natexlab}[1]{#1}
\providecommand{\url}[1]{\texttt{#1}}
\expandafter\ifx\csname urlstyle\endcsname\relax
  \providecommand{\doi}[1]{doi: #1}\else
  \providecommand{\doi}{doi: \begingroup \urlstyle{rm}\Url}\fi

\bibitem[Blattmann et~al.(2023)Blattmann, Dockhorn, Kulal, Mendelevitch,
  Kilian, Lorenz, Levi, English, Voleti, Letts, et~al.]{blattmann2023stable}
Andreas Blattmann, Tim Dockhorn, Sumith Kulal, Daniel Mendelevitch, Maciej
  Kilian, Dominik Lorenz, Yam Levi, Zion English, Vikram Voleti, Adam Letts,
  et~al.
\newblock Stable video diffusion: Scaling latent video diffusion models to
  large datasets.
\newblock \emph{arXiv preprint arXiv:2311.15127}, 2023.

\bibitem[Chen et~al.(2020)Chen, Radford, Child, Wu, Jun, Luan, and
  Sutskever]{chen2020generative}
Mark Chen, Alec Radford, Rewon Child, Jeffrey Wu, Heewoo Jun, David Luan, and
  Ilya Sutskever.
\newblock Generative pretraining from pixels.
\newblock In \emph{International conference on machine learning}, pp.\
  1691--1703. PMLR, 2020.

\bibitem[comfyanonymous(2023)]{comfyanonymous2023comfyui}
comfyanonymous.
\newblock Comfyui.
\newblock \url{https://github.com/comfyanonymous/ComfyUI}, 2023.
\newblock GitHub repository.

\bibitem[Cui et~al.(2025)Cui, Chen, Deng, Huang, Li, Liu, Liu, Luo, Wang, Wang,
  et~al.]{cui2025emu3}
Yufeng Cui, Honghao Chen, Haoge Deng, Xu~Huang, Xinghang Li, Jirong Liu, Yang
  Liu, Zhuoyan Luo, Jinsheng Wang, Wenxuan Wang, et~al.
\newblock Emu3. 5: Native multimodal models are world learners.
\newblock \emph{arXiv preprint arXiv:2510.26583}, 2025.

\bibitem[Deng et~al.(2025)Deng, Zhu, Li, Gou, Li, Wang, Zhong, Yu, Nie, Song,
  et~al.]{deng2025emerging}
Chaorui Deng, Deyao Zhu, Kunchang Li, Chenhui Gou, Feng Li, Zeyu Wang, Shu
  Zhong, Weihao Yu, Xiaonan Nie, Ziang Song, et~al.
\newblock Emerging properties in unified multimodal pretraining.
\newblock \emph{arXiv preprint arXiv:2505.14683}, 2025.

\bibitem[Esser et~al.(2024)Esser, Kulal, Blattmann, Entezari, M{\"u}ller,
  Saini, Levi, Lorenz, Sauer, Boesel, et~al.]{esser2024scaling}
Patrick Esser, Sumith Kulal, Andreas Blattmann, Rahim Entezari, Jonas
  M{\"u}ller, Harry Saini, Yam Levi, Dominik Lorenz, Axel Sauer, Frederic
  Boesel, et~al.
\newblock Scaling rectified flow transformers for high-resolution image
  synthesis.
\newblock In \emph{Forty-first International Conference on Machine Learning},
  2024.

\bibitem[Fan et~al.(2024)Fan, Li, Qin, Li, Sun, Rubinstein, Sun, He, and
  Tian]{fan2024fluid}
Lijie Fan, Tianhong Li, Siyang Qin, Yuanzhen Li, Chen Sun, Michael Rubinstein,
  Deqing Sun, Kaiming He, and Yonglong Tian.
\newblock Fluid: Scaling autoregressive text-to-image generative models with
  continuous tokens.
\newblock \emph{arXiv preprint arXiv:2410.13863}, 2024.

\bibitem[{Glaive AI}(2023)]{function_calling_chatml_dataset}
{Glaive AI}.
\newblock Function-calling-chatml: Function calling dataset in chatml format.
\newblock
  \url{https://modelscope.cn/datasets/AI-ModelScope/function-calling-chatml},
  2023.

\bibitem[Guo et~al.(2025)Guo, Xu, Wang, Lin, Zhou, Zhang, Su, and
  Chen]{guo2025comfymind}
Litao Guo, Xinli Xu, Luozhou Wang, Jiantao Lin, Jinsong Zhou, Zixin Zhang,
  Bolan Su, and Ying-Cong Chen.
\newblock Comfymind: Toward general-purpose generation via tree-based planning
  and reactive feedback.
\newblock \emph{arXiv preprint arXiv:2505.17908}, 2025.

\bibitem[Han et~al.(2024)Han, Liu, Jiang, Yan, Zhang, Yuan, Peng, and
  Liu]{han2024infinity}
Jian Han, Jinlai Liu, Yi~Jiang, Bin Yan, Yuqi Zhang, Zehuan Yuan, Bingyue Peng,
  and Xiaobing Liu.
\newblock Infinity: Scaling bitwise autoregressive modeling for high-resolution
  image synthesis.
\newblock \emph{arXiv preprint arXiv:2412.04431}, 2024.

\bibitem[Ho et~al.(2020)Ho, Jain, and Abbeel]{ho2020denoising}
Jonathan Ho, Ajay Jain, and Pieter Abbeel.
\newblock Denoising diffusion probabilistic models.
\newblock \emph{Advances in neural information processing systems},
  33:\penalty0 6840--6851, 2020.

\bibitem[Huang et~al.(2025)Huang, Ma, Zhao, Wu, Ji, Zhang, Hu, Sun, and
  Ji]{huang2025comfygpt}
Oucheng Huang, Yuhang Ma, Zeng Zhao, Mingrui Wu, Jiayi Ji, Rongsheng Zhang,
  Zhipeng Hu, Xiaoshuai Sun, and Rongrong Ji.
\newblock Comfygpt: A self-optimizing multi-agent system for comprehensive
  comfyui workflow generation.
\newblock \emph{arXiv preprint arXiv:2503.17671}, 2025.

\bibitem[Labs(2024)]{flux2024}
Black~Forest Labs.
\newblock Flux.
\newblock \url{https://github.com/black-forest-labs/flux}, 2024.

\bibitem[Lai et~al.(2024)Lai, Zhang, Liu, Li, Lu, and Guo]{lai2024spider}
Jinxiang Lai, Jie Zhang, Jun Liu, Jian Li, Xiaocheng Lu, and Song Guo.
\newblock Spider: Any-to-many multimodal llm.
\newblock \emph{arXiv preprint arXiv:2411.09439}, 2024.

\bibitem[Li et~al.(2023)Li, Yin, Li, Chen, Wang, Ren, Li, Yang, Xu, Sun, Kong,
  and Liu]{m3it_dataset}
Lei Li, Yuwei Yin, Shicheng Li, Liang Chen, Peiyi Wang, Shuhuai Ren, Mukai Li,
  Yazheng Yang, Jingjing Xu, Xu~Sun, Lingpeng Kong, and Qi~Liu.
\newblock M$^3$it: A large-scale dataset towards multi-modal multilingual
  instruction tuning.
\newblock In \emph{arXiv preprint arXiv:2306.04387}, June 2023.
\newblock URL \url{https://arxiv.org/abs/2306.04387}.

\bibitem[Li et~al.(2025{\natexlab{a}})Li, Kallidromitis, Gokul, Liao, Kato,
  Kozuka, and Grover]{li2025omniflow}
Shufan Li, Konstantinos Kallidromitis, Akash Gokul, Zichun Liao, Yusuke Kato,
  Kazuki Kozuka, and Aditya Grover.
\newblock Omniflow: Any-to-any generation with multi-modal rectified flows.
\newblock In \emph{Proceedings of the Computer Vision and Pattern Recognition
  Conference}, pp.\  13178--13188, 2025{\natexlab{a}}.

\bibitem[Li et~al.(2025{\natexlab{b}})Li, Lin, Jiang, Cao, Liu, Zhang, Huang,
  Chen, Sun, Wang, Lu, Qin, Zhu, Yao, Fan, Li, Wang, Liu, Zhu, Zhu, Shi, Wang,
  Guan, Tang, Liu, Jiang, Yang, Liu, Zhang, and
  Zhou]{agent_foundation_models_repo}
Weizhen Li, Jianbo Lin, Zhuosong Jiang, Jingyi Cao, Xinpeng Liu, Jiayu Zhang,
  Zhenqiang Huang, Qianben Chen, Weichen Sun, Qiexiang Wang, Hongxuan Lu,
  Tianrui Qin, Chenghao Zhu, Yi~Yao, Shuying Fan, Xiaowan Li, Tiannan Wang, Pai
  Liu, King Zhu, He~Zhu, Dingfeng Shi, Piaohong Wang, Yeyi Guan, Xiangru Tang,
  Minghao Liu, Yuchen~Eleanor Jiang, Jian Yang, Jiaheng Liu, Ge~Zhang, and
  Wangchunshu Zhou.
\newblock Chain-of-agents: End-to-end agent foundation models via multi-agent
  distillation and agentic rl.
\newblock \emph{arXiv preprint arXiv:2508.13167}, August 2025{\natexlab{b}}.
\newblock URL \url{https://arxiv.org/abs/2508.13167}.

\bibitem[Lin et~al.(2024)Lin, Ge, Cheng, Li, Zhu, Wang, He, Ye, Yuan, Chen,
  et~al.]{lin2024open}
Bin Lin, Yunyang Ge, Xinhua Cheng, Zongjian Li, Bin Zhu, Shaodong Wang, Xianyi
  He, Yang Ye, Shenghai Yuan, Liuhan Chen, et~al.
\newblock Open-sora plan: Open-source large video generation model.
\newblock \emph{arXiv preprint arXiv:2412.00131}, 2024.

\bibitem[Liu(2025)]{deepseek_r1_distill_110k_sft}
Cong Liu.
\newblock Chinese-deepseek-r1-distill-data-110k-sft.
\newblock
  \url{https://modelscope.cn/datasets/liucong/Chinese-DeepSeek-R1-Distill-data-110k-SFT},
  2025.

\bibitem[Liu et~al.(2025)Liu, Huang, Zeng, Hao, Yu, Li, Wang, Gan, Liu, Yu,
  Wang, Wang, Ning, Hou, Wang, Wu, Wang, Liu, Wang, Tang, Tu, Shang, Jiang,
  Tang, Lian, Liu, and Chen]{toolace_dataset}
Weiwen Liu, Xu~Huang, Xingshan Zeng, Xinlong Hao, Shuai Yu, Dexun Li, Shuai
  Wang, Weinan Gan, Zhengying Liu, Yuanqing Yu, Zezhong Wang, Yuxian Wang,
  Wu~Ning, Yutai Hou, Bin Wang, Chuhan Wu, Xinzhi Wang, Yong Liu, Yasheng Wang,
  Duyu Tang, Dandan Tu, Lifeng Shang, Xin Jiang, Ruiming Tang, Defu Lian, Qun
  Liu, and Enhong Chen.
\newblock Toolace: Winning the points of llm function calling.
\newblock In \emph{International Conference on Learning Representations
  (ICLR)}, 2025.
\newblock URL \url{https://arxiv.org/abs/2409.00920}.

\bibitem[{ModelScope Team}(2024)]{msagent}
{ModelScope Team}.
\newblock Ms-agent: Modelscope agent sft dataset.
\newblock \url{https://modelscope.cn/datasets/iic/ms_agent}, 2024.

\bibitem[Pang et~al.(2024)Pang, Jin, Yang, Lin, Zhu, Tang, Chen, Tay, Lim,
  Yang, et~al.]{pang2024next}
Yatian Pang, Peng Jin, Shuo Yang, Bin Lin, Bin Zhu, Zhenyu Tang, Liuhan Chen,
  Francis~EH Tay, Ser-Nam Lim, Harry Yang, et~al.
\newblock Next patch prediction for autoregressive visual generation.
\newblock \emph{arXiv preprint arXiv:2412.15321}, 2024.

\bibitem[Peebles \& Xie(2023)Peebles and Xie]{peebles2023scalable}
William Peebles and Saining Xie.
\newblock Scalable diffusion models with transformers.
\newblock In \emph{Proceedings of the IEEE/CVF International Conference on
  Computer Vision}, pp.\  4195--4205, 2023.

\bibitem[Peng et~al.(2023{\natexlab{a}})Peng, Li, He, Galley, and
  Gao]{alpaca_gpt4_en}
Baolin Peng, Chunyuan Li, Pengcheng He, Michel Galley, and Jianfeng Gao.
\newblock Alpaca-gpt4-data-en: English instruction following dataset.
\newblock
  \url{https://modelscope.cn/datasets/AI-ModelScope/alpaca-gpt4-data-en},
  2023{\natexlab{a}}.

\bibitem[Peng et~al.(2023{\natexlab{b}})Peng, Li, He, Galley, and
  Gao]{alpaca_gpt4_zh}
Baolin Peng, Chunyuan Li, Pengcheng He, Michel Galley, and Jianfeng Gao.
\newblock Alpaca-gpt4-data-zh: Chinese instruction following dataset.
\newblock
  \url{https://modelscope.cn/datasets/AI-ModelScope/alpaca-gpt4-data-zh},
  2023{\natexlab{b}}.

\bibitem[{PowerInfer Team}(2025)]{longcot_refine_500k}
{PowerInfer Team}.
\newblock Longcot-refine-500k: Long chain-of-thought reasoning dataset.
\newblock \url{https://modelscope.cn/datasets/PowerInfer/LONGCOT-Refine-500K},
  2025.

\bibitem[Qin et~al.(2024)Qin, Liang, Ye, Zhu, Yan, Lu, Lin, Cong, Tang, Qian,
  Zhao, Hong, Tian, Xie, Zhou, Gerstein, Li, Liu, and Sun]{toolbench_dataset}
Yujia Qin, Shihao Liang, Yining Ye, Kunlun Zhu, Lan Yan, Yaxi Lu, Yankai Lin,
  Xin Cong, Xiangru Tang, Bill Qian, Sihan Zhao, Lauren Hong, Runchu Tian,
  Ruobing Xie, Jie Zhou, Mark Gerstein, Dahai Li, Zhiyuan Liu, and Maosong Sun.
\newblock Toolllm: Facilitating large language models to master 16000+
  real-world apis.
\newblock In \emph{International Conference on Learning Representations
  (ICLR)}, 2024.
\newblock URL \url{https://arxiv.org/abs/2307.16789}.

\bibitem[Ramesh et~al.(2022)Ramesh, Dhariwal, Nichol, Chu, and
  Chen]{ramesh2022hierarchical}
Aditya Ramesh, Prafulla Dhariwal, Alex Nichol, Casey Chu, and Mark Chen.
\newblock Hierarchical text-conditional image generation with clip latents.
\newblock \emph{arXiv preprint arXiv:2204.06125}, 1\penalty0 (2):\penalty0 3,
  2022.

\bibitem[Saharia et~al.(2022)Saharia, Chan, Saxena, Li, Whang, Denton,
  Ghasemipour, Gontijo~Lopes, Karagol~Ayan, Salimans,
  et~al.]{saharia2022photorealistic}
Chitwan Saharia, William Chan, Saurabh Saxena, Lala Li, Jay Whang, Emily~L
  Denton, Kamyar Ghasemipour, Raphael Gontijo~Lopes, Burcu Karagol~Ayan, Tim
  Salimans, et~al.
\newblock Photorealistic text-to-image diffusion models with deep language
  understanding.
\newblock \emph{Advances in neural information processing systems},
  35:\penalty0 36479--36494, 2022.

\bibitem[Singer et~al.(2022)Singer, Polyak, Hayes, Yin, An, Zhang, Hu, Yang,
  Ashual, Gafni, et~al.]{singer2022make}
Uriel Singer, Adam Polyak, Thomas Hayes, Xi~Yin, Jie An, Songyang Zhang, Qiyuan
  Hu, Harry Yang, Oron Ashual, Oran Gafni, et~al.
\newblock Make-a-video: Text-to-video generation without text-video data.
\newblock \emph{arXiv preprint arXiv:2209.14792}, 2022.

\bibitem[Sun et~al.(2024)Sun, Jiang, Chen, Zhang, Peng, Luo, and
  Yuan]{sun2024autoregressive}
Peize Sun, Yi~Jiang, Shoufa Chen, Shilong Zhang, Bingyue Peng, Ping Luo, and
  Zehuan Yuan.
\newblock Autoregressive model beats diffusion: Llama for scalable image
  generation.
\newblock \emph{arXiv preprint arXiv:2406.06525}, 2024.

\bibitem[Tian et~al.(2024)Tian, Jiang, Yuan, Peng, and Wang]{tian2024visual}
Keyu Tian, Yi~Jiang, Zehuan Yuan, Bingyue Peng, and Liwei Wang.
\newblock Visual autoregressive modeling: Scalable image generation via
  next-scale prediction.
\newblock \emph{arXiv preprint arXiv:2404.02905}, 2024.

\bibitem[Wu et~al.(2025)Wu, Zhu, and Shou]{wu2025automated}
Weijia Wu, Zeyu Zhu, and Mike~Zheng Shou.
\newblock Automated movie generation via multi-agent cot planning.
\newblock \emph{arXiv preprint arXiv:2503.07314}, 2025.

\bibitem[Xiao et~al.(2025)Xiao, Yang, Zhang, Cai, Zhao, Guo, Wetzstein,
  Agrawala, Yuille, and Jiang]{xiao2025captain}
Junfei Xiao, Ceyuan Yang, Lvmin Zhang, Shengqu Cai, Yang Zhao, Yuwei Guo,
  Gordon Wetzstein, Maneesh Agrawala, Alan Yuille, and Lu~Jiang.
\newblock Captain cinema: Towards short movie generation.
\newblock \emph{arXiv preprint arXiv:2507.18634}, 2025.

\bibitem[Xu et~al.(2025{\natexlab{a}})Xu, Mei, Li, Wu, Yan, Lai, Zhang, and
  Wu]{xu2025mm}
Xuenan Xu, Jiahao Mei, Chenliang Li, Yuning Wu, Ming Yan, Shaopeng Lai,
  Ji~Zhang, and Mengyue Wu.
\newblock Mm-storyagent: Immersive narrated storybook video generation with a
  multi-agent paradigm across text, image and audio.
\newblock \emph{arXiv preprint arXiv:2503.05242}, 2025{\natexlab{a}}.

\bibitem[Xu et~al.(2025{\natexlab{b}})Xu, Wang, Yang, Wang, Luo, Zhang, Hu, and
  Zhang]{xu2025comfyui}
Zhenran Xu, Yiyu Wang, Xue Yang, Longyue Wang, Weihua Luo, Kaifu Zhang, Baotian
  Hu, and Min Zhang.
\newblock Comfyui-r1: Exploring reasoning models for workflow generation.
\newblock \emph{arXiv preprint arXiv:2506.09790}, 2025{\natexlab{b}}.

\bibitem[Xue et~al.(2025)Xue, Lu, Huang, Wang, Ouyang, and
  Bai]{xue2025comfybench}
Xiangyuan Xue, Zeyu Lu, Di~Huang, Zidong Wang, Wanli Ouyang, and Lei Bai.
\newblock Comfybench: Benchmarking llm-based agents in comfyui for autonomously
  designing collaborative ai systems.
\newblock In \emph{Proceedings of the Computer Vision and Pattern Recognition
  Conference}, pp.\  24614--24624, 2025.

\bibitem[Yang et~al.(2024)Yang, Teng, Zheng, Ding, Huang, Xu, Yang, Hong,
  Zhang, Feng, et~al.]{yang2024cogvideox}
Zhuoyi Yang, Jiayan Teng, Wendi Zheng, Ming Ding, Shiyu Huang, Jiazheng Xu,
  Yuanming Yang, Wenyi Hong, Xiaohan Zhang, Guanyu Feng, et~al.
\newblock Cogvideox: Text-to-video diffusion models with an expert transformer.
\newblock \emph{arXiv preprint arXiv:2408.06072}, 2024.

\bibitem[Zhang et~al.(2024)Zhang, Lan, Zhu, Liu, Hoang, Kokane, Yao, Tan,
  Prabhakar, Chen, Liu, Feng, Awalgaonkar, Murthy, Hu, Chen, Xu, Niebles,
  Heinecke, Wang, Savarese, and Xiong]{xlam_function_calling_60k}
Jianguo Zhang, Tian Lan, Ming Zhu, Zuxin Liu, Thai Hoang, Shirley Kokane,
  Weiran Yao, Juntao Tan, Akshara Prabhakar, Haolin Chen, Zhiwei Liu, Yihao
  Feng, Tulika Awalgaonkar, Rithesh Murthy, Eric Hu, Zeyuan Chen, Ran Xu,
  Juan~Carlos Niebles, Shelby Heinecke, Huan Wang, Silvio Savarese, and Caiming
  Xiong.
\newblock xlam: A family of large action models to empower ai agent systems.
\newblock \emph{arXiv preprint arXiv:2409.03215}, September 2024.
\newblock URL \url{https://arxiv.org/abs/2409.03215}.

\bibitem[Zheng et~al.(2025)Zheng, Huang, Liu, Zou, He, Zhang, Gu, Zhang, He,
  Zheng, et~al.]{zheng2025vbench}
Dian Zheng, Ziqi Huang, Hongbo Liu, Kai Zou, Yinan He, Fan Zhang, Lulu Gu,
  Yuanhan Zhang, Jingwen He, Wei-Shi Zheng, et~al.
\newblock Vbench-2.0: Advancing video generation benchmark suite for intrinsic
  faithfulness.
\newblock \emph{arXiv preprint arXiv:2503.21755}, 2025.

\end{thebibliography}
\bibliographystyle{colm2024_conference}
\end{document}